\DeclarePairedDelimiter{\ceil}{\lceil}{\rceil}
\DeclarePairedDelimiter{\SetBrack}{\{}{\}}
\newtheorem{thm}{Theorem}[section]
\newtheorem{cor}[thm]{Corollary}
\newtheorem{lem}[thm]{Lemma}
\newtheorem{claim}[thm]{Claim}
\newcommand{\NN}{N}
\newcommand{\oneover}[1]{\frac{1}{#1}}
\newcommand{\spaceo}{\hspace{2 mm}}
\newcommand{\setsep}{ \spaceo | \spaceo}
\newcommand{\half}{\frac{1}{2}}
\newcommand{\Prob}[1]{\mathbb{P}\left( #1 \right)}
\newcommand{\Exp}{\mathbb{E}}
\newcommand{\exponent}[1]{exp\left( #1 \right)}
\newcommand{\eps}{\epsilon}
\newcommand{\Abs}[1]{\left| #1 \right|}
\newcommand{\argmax}{\operatornamewithlimits{argmax}}
\newcommand{\quater}{\frac{1}{4}}
\newcommand{\AppSecRestarts}{A}
\newcommand{\AppSecLFR}{C}
\newcommand{\AppSecLFROverlap}{D}
\title{Community Detection via Measure Space Embedding}
\author{
  Mark Kozdoba \\
  \texttt{markk@tx.technion.ac.il}
 \and
 Shie Mannor\\
  \texttt{shie@ee.technion.ac.il}
}
\title{Overlapping Communities Detection \\via Measure Space Embedding}
\date{}
\begin{document}

\maketitle

\begin{abstract} 
We present a new algorithm for community detection. The algorithm uses random walks to embed the graph in a 
space of measures, after which a modification of $k$-means in that space is applied. The algorithm is 
therefore fast and easily parallelizable. We evaluate the algorithm on standard random graph benchmarks, 
including some overlapping community benchmarks,  and find its performance to be better or at least as good as 
previously known  algorithms. We also prove a linear time (in number of edges) guarantee for the 
algorithm on a $p,q$-stochastic block model with 
 where $p \geq c\cdot N^{-\half +  \epsilon}$ and $p-q \geq c' \sqrt{p N^{-\half +  \epsilon} \log N}$. 
\end{abstract} 

%We also demonstrate the algorithm by clustering the SP500 stocks.

\section{Introduction}
Community detection in graphs, also known as graph clustering, is a problem where one wishes to identify subsets of the vertices of a graph such that the connectivity inside the subset is in some way denser than the connectivity of the subset with the rest of the graph. Such subsets are referred to as communities, and it often happens in applications that if two vertices belong to the same community, they have  similar application-related qualities. This in turn may allow for a higher level analysis of the graph, in terms of communities instead of individual nodes. Community detection finds applications in a diversity of fields, such as social networks analysis, communication and traffic design, in biological networks, and, generally, in most fields where meaningful graphs can arise (see, for instance,  \cite{Fortunato201075} for a survey).
In addition to direct applications to graphs, community detection can, for instance, be also applied to general Euclidean space clustering problems, by transforming the metric to a weighted graph structure (see  \cite{vL} for a survey). 

Community detection problems come in different flavours, depending on whether the graph in question is simple, or weighted, or/and directed. Another important distinction is whether the communities are allowed to overlap or not. In the overlapping communities case, each vertex can belong to several subsets. 

A difficulty with community detection is that the notion of 
community is not well defined. Different algorithms may employ 
different formal notions of a community, and can sometimes produce 
different results. Nevertheless, there exist several widely adopted benchmarks -- 
synthetic models and real-life graphs -- where 
the ground truth communities are known, and algorithms are evaluated 
based on the
similarity of the produced output to the ground truth, and based on 
the amount of 
required computations.  On the theoretical side, most of the effort 
is concentrated on developing 
algorithms with guaranteed recovery of clusters for graphs generated 
from variants of the Stochastic Block 
Model (referred to as SBM in what follows, \cite{Fortunato201075}). 

In this paper we present a new algorithm, DER (Diffusion Entropy Reducer, for reasons to be clarified later), for non-overlapping 
community detection. The algorithm is an adaptation of the k-means algorithm to a space of measures 
which are generated by short random walks from the nodes of the graph. The adaptation is done by introducing a 
certain natural cost on the space of the measures. As detailed below, we evaluate the DER on several 
benchmarks and find its performance to be as good or better than the best alternative method. In addition, 
we establish some theoretical guarantees on its performance. While the main purpose of the theoretical 
analysis in this paper is to provide some insight into why DER works, our result is also one of the very few 
results in the literature that show reconstruction in linear time. 

On the empirical side, we first evaluate our algorithm on a set of random graph benchmarks known as the LFR 
models, \cite{LFBench}. In \cite{LFComp}, 12 other algorithms were evaluated on these benchmarks, and three 
algorithms, described in \cite{Infomap}, \cite{RN_method} and \cite{blondel2008}, were identified, that exhibited 
significantly better performance than the others, and similar performance among themselves. We evaluate our 
algorithm on random graphs with the same parameters as those used in \cite{LFComp} and find its performance to 
be as good as these three best methods. Several well known methods, including spectral clustering 
\cite{NJW}, exhaustive modularity optimization (see \cite{LFComp} for details), and clique percolation 
\cite{cfinder}, have worse performance on the above benchmarks. 

Next, while our algorithm is designed for non-overlapping communities, we introduce a simple modification that 
enables it to detect overlapping communities in some cases. Using this modification, we compare the 
performance of our algorithm to the performance of 4 overlapping community algorithms on a set of benchmarks 
that were considered in \cite{GopBlei}. We find that in all cases DER performs better than all 4 algorithms. 
None of the algorithms evaluated in \cite{LFComp} and \cite{LFBench} has theoretical guarantees.

\iffalse
Finally, we present and application of our algorithm to the 
graph of stocks of the SP500 index, where edges are weighted by the correlations of returns over a 4 year 
period. Here, as is the case with most real data, there is no single ``right" quality measure to evaluate the 
results of clustering, but we compare our results to the standard classification of these stocks into 
industries and to the results of spectral clustering (\cite{NJW}; \cite{vL}) on the same data. 
\fi

\iffalse
More specifically, 
as we detail in Section \ref{sec:overlap_case}, the overlapping 
communities version of the LFR benchmarks, \cite{LFRBench}, has 
certain ``uncorrelatedness" properties that make the generated communities behave in some ways as non-
overlapping, while appearing as a heavy overlap (each community in these models has property that a large 
majority of nodes in it do belong to several other communities). We note that this peculiarity does not seem 
to be documented in literature, despite the fact that real overlapping communities do not usually exhibit this 
uncorrelatedness (see Section \ref{sec:overlap_case}).  Indeed, several overlapping community algorithms use 
this model as benchmark, and the overlapping community algorithms survey and benchmark paper \cite{XieSurvey} 
is largely based on them. 
\fi

On the theoretical side, we show that DER reconstructs with high probability the partition of 
the $p,q$-stochastic block model such that, roughly, 
$p \geq N^{-\half}$, where $N$ is the number of vertices, and $p-q \geq c \sqrt{p N^{-\half +  \epsilon} \log N}$ (this holds in particular when $\frac{p}{q} \geq c'>1$) for some 
constant $c>0$. We show that for this reconstruction only one iteration of the $k$-means is sufficient. In 
fact, three passages over the set of edges suffice. While the cost function we introduce for DER will appear 
at first to have purely probabilistic motivation, for the purposes of the proof we provide an alternative 
interpretation of this cost in terms of the graph, and the arguments show which properties of the graph are 
useful for the convergence of the algorithm. 

\iffalse
This result may be of interest from several 
reasons. Apart from supplying a basic theoretic verification that DER does the right thing, 
it is also the first result that establishes reconstruction of SBM in purely linear time in the number of 
edges for non-dense graphs. In addition, this result can be viewed in the 
framework of guarantees for variations of EM and k-means algorithms (one early work in this line is 
\cite{Dasgupta2EM}, or \cite{Inaba} for a different direction).  In this context, the general mantra is that a 
correct initialization is the crucial step of the 
algorithm. As will be elaborated in Section \ref{analytic_sec}, while the standard methods of initialization 
can not work in our setting, we show that a different initialization procedure, in conjunction with a certain 
non-euclidean metric, do work.  

\fi

Finally, although this is not the emphasis of the present paper,
it is worth noting here that, as will be evident later, our algorithm 
can be trivially parallelalized. This seems to be a particularly nice 
feature since most other algorithms, including spectral clustering, are not easy to parallelalize and do not 
seem to have parallel implementations at present.

The rest of the paper is organized as follows: Section \ref{lit_sec} overviews 
related work and discusses relations to our results. In Section \ref{algo_sec} we 
provide the motivation for the definition of the algorithm, derive the cost 
function and establish some basic properties. Section \ref{empirical_sec} we 
present the results on the empirical evaluation of the algorithm and 
Section \ref{analytic_sec} describes the theoretical guarantees and the general proof scheme. 
Some proofs and additional material are provided in the supplementary material.

\section{Literature review}
\label{lit_sec}

Community detection in graphs has been an active research topic for the last two decades 
and generated a huge literature. We refer to ~\cite{Fortunato201075} for an 
extensive survey. Throughout the paper, let $G=(V,E)$ be a graph, and let $P=P_1,\ldots,P_k$ be a 
partition of $V$. Loosely speaking, a partition $P$ is a good community structure 
on $G$ if for each $P_i \in P$, more edges stay within $P_i$ than leave $P_i$. This is usually quantified 
via some cost function that assigns larger scalars to partitions $P$ that are in some 
sense better separated. Perhaps the most well known cost function is the modularity, which was introduced in 
\cite{GNFoot} and served as a basis of a large number of community detection algorithms 
(\cite{Fortunato201075}). The popular spectral clustering methods,  \cite{NJW}; \cite{vL}, can also 
be viewed as a (relaxed)  optimization of a certain cost (see \cite{vL}). 

Yet another group of algorithms is based on fitting a generative model of a graph with
communities to a given graph. References \cite{newman2007mixture}; \cite{GopBlei} are two 
among the many examples. Perhaps the simplest generative model for non-overlapping 
communities is the stochastic block model, see \cite{PPM},\cite{Fortunato201075} which we now 
define: Let $P=P_1,\ldots,P_k$ be a partition of $V$ into $k$ subsets. $p,q$-SBM is 
a distribution over the graphs on vertex set $V$ , such that all edges are 
independent and for $i,j \in V$, the edge $(i,j)$ exists with probability $p$ if 
$i,j$ belong to the same $P_s$, and it exists with probability $q$ otherwise. If 
$q << p$, the components $P_i$ will be well separated in this model. 
We denote the number 
of nodes by $N=|V|$ throughout the paper.

Graphs generated from SBMs can serve as a benchmark for community detection algorithms. However, such graphs 
lack certain desirable properties, such as power-law degree and community size distributions. Some of these 
issues were fixed in the benchmark models in \cite{LFBench}; \cite{LFRBench}, and these models are referred to 
as LFR models in the literature. More details on these models are given in Section \ref{empirical_sec}. 

 We now turn to the discussion of the theoretical guarantees. Typically results in this 
direction provide algorithms that can reconstruct,with high probability, the ground 
partition of a graph drawn from a variant of a $p,q$-SBM model, with some, possibly 
large, number of components $k$. Recent results include the works \cite{anandkumar_tensor} and \cite{Chen}. In this paper, however, we shall only analytically analyse the $k=2$ case, and such that, in 
addition, $|P_1|=|P_2|$. 

 For this case, the best known reconstruction result was obtained already in \cite{Bopanna} and was only 
improved in terms of runtime since then. Namely, Bopanna's result states that if $p \geq 
c_1\frac{\log N}{N}$ and $p-q \geq c_2 \frac{\log N}{N}$, then with high probability the partition is 
reconstructible.   Similar bound can be obtained, for instance, from the approaches in 
\cite{anandkumar_tensor}; \cite{Chen}, to name a few.  The methods in this group are generally based on the 
analysis of the spectrum of the adjacency matrix. The run time of these algorithms is non-linear 
in the size of the graph and it is not known how these algorithms behave on graphs not generated by the 
probabilistic models that they assume. 

It is generally known that when the graphs are dense ($p$ of order of constant), simple linear time 
reconstruction algorithms exist (see \cite{SBB}). The first, and to the best of our 
knowledge, the only previous linear time algorithm for non dense graphs was proposed in \cite{SBB}. This algorithm works for $p \geq c_3(\epsilon) N^{-\half + \epsilon}$, for 
any fixed $\epsilon > 0$. The approach of \cite{SBB} was further extended in \cite{ShamirTsur}, to handle more 
general cluster sizes. These approaches approaches differ significantly from the spectrum based 
methods, and provide equally important theoretical insight. However, their empirical behaviour was never 
studied, and it is likely that even for graphs generated from the SBM, extremely high values of $N$ would be 
required for the algorithms to work, due to large constants in the concentration 
inequalities (see the concluding remarks in \cite{ShamirTsur}). 

\iffalse
In Theorem \ref{thm:pq_bound} we show that DER reconstructs an SBM with two components in linear time, for a 
range of parameters similar to that of \cite{SBB}. A brief comparison of our approach to the spectral and non-
spectral methods mentioned above is provided in 
 Section \ref{analytic_sec}.
\fi

\iffalse
Our results can be viewed as convergence results for a version of $k-$means 
algorithm, in a non-euclidean setting. Some results on Euclidean k-means and EM 
include  \cite{ORSS}, \cite{Dasgupta2EM}, \cite{DasguptaMog} and \cite{KumarK10}.
\fi       

\section{Algorithm}
\label{algo_sec}

Let $G$ be a finite undirected graph with a vertex set $V = \{1,\ldots, n\}$. Denote by $A = \{a_{ij}\}$ the symmetric adjacency matrix of $G$, where $a_{ij} \geq 0$ are edge weights, and for a vertex $i \in V$, set $d_i = \sum_{j} a_{ij}$ to be the degree of $i$. Let $D$ be an $n \times n$ diagonal matrix such that $D_{ii} = d_i$, and set $T = D^{-1} A$ to be the transition matrix of the random walk on $G$. Set also $p_{ij} = T_{ij}$. Finally, denote by $\pi$, $\pi(i) = \frac{d_i}{\sum_j d_j}$ the stationary measure of the random walk. 

  A number of community detection algorithms are based on the intuition that distinct communities should be relatively closed under the random walk (see \cite{Fortunato201075}), and  employ different notions of closedness. Our approach also takes this point of view. 
  
  For a fixed $L \in \NN$, consider the following sampling process on the graph: Choose vertex $v_0$ randomly from $\pi$, and perform $L$ steps of a random walk on $G$, starting from $v_0$. This results in a length $L+1$ sequence of vertices, $x^{1}$. Repeat the process $N$ times independently, to obtain also $x^1,\ldots,x^N$. 
  
  Suppose now that we would like to model the sequences $x^s$ as a multinomial mixture model with a single component. Since each coordinate $x^s_t$ is distributed according to $\pi$, the single component of the mixture  should be $\pi$ itself, when $N$ grows.  Now suppose that we would like to model the same sequences with a mixture of two components. Because the sequences are sampled from a random walk rather then independently from each other, the components need no longer be $\pi$ itself, as in any mixture where some elements appear more often together then others.  
  The mixture as above can be found using the EM algorithm, and this in principle summarizes our approach. The only additional step, as discussed above, is to replace the sampled random walks with their true distributions, which simplifies the analysis and also leads to somewhat improved empirical performance. 
 
  We now present the DER algorithm for detecting the non-overlapping
  communities. Its input is the number of components to detect, $k$, the length of 
  the walks $L$, an initialization partition 
  $P =\left\{ P_1,\ldots,P_k \right\}$ of $V$ into disjoint subsets. $P$ would be 
  usually taken to be a random partition of $V$ into equally sized subsets.

For $t = 0,1,\ldots $ and a vertex $i \in V$, denote by $w_i^t$ the $i$-th row of 
the matrix $T^t$. Then $w^t_i$ is the distribution of the random walk on $G$, started at $i$, after $t$ 
steps.
Set $w_i = \frac{1}{L} (w_i^1 + \ldots + w_i^L)$, which is the distribution 
corresponding to the average of the empirical measures of sequences $x$ that start 
at $i$. 

For two probability measures $\nu,\mu$ on $V$, set 
\[
	D(\nu,\mu) = \sum_{i\in V} \nu(i) \log \mu(i). 
\]
Although $D$ is not a metric, will act as a distance function in our algorithm. Note that if $\nu$ was an 
empirical measure, then, up to a constant, $D$ would be just the log-likelihood of 
observing $\nu$ from independent samples of $\mu$. 

For a subset $S \subset V$, set $\pi_S$ to be the restriction of the measure $\pi$ 
to $S$, and also set $d_S = \sum_{i \in S} d_i $ to be the full degree of $S$.  Let 
\begin{equation}
\label{eq:mu_s_def}
\mu_S = \oneover{d_S} \sum_{i \in S} d_i w_i
\end{equation}
denote the distribution of the random walk started from $\pi_S$. 

The complete DER algorithm is described in Algorithm \ref{alg:noc}.
\begin{algorithm}[tb]
   \caption{DER}
   \label{alg:noc}
\begin{algorithmic}[1]
   \STATE {\bfseries Input:} Graph $G$, walk length $L$, \\ 
   		\spaceo \spaceo number of components $k$.
   \STATE Compute the measures $w_i$.
   \STATE Initialize $P_1,\ldots,P_k$ to be a random partition such that \\
          \spaceo \spaceo  $|P_i| = |V| / k $ for all $i$. 
   
   \REPEAT
   \STATE (1) \spaceo For all $s\leq k$, construct $\mu_s = \mu_{P_s}$.
   \STATE (2) \spaceo For all $s\leq k$, set \\
   \[
   		P_s =  \left\{ i \in V \setsep s = \argmax_l D(w_i, \mu_l ) \right\}.
   \]
   \UNTIL{the sets $P_s$ do not change}
\end{algorithmic}
\end{algorithm}

The algorithm is essentially a k-means algorithm in a non-Euclidean space, where the points are the measures $w_i$, each occurring with multiplicity $d_i$. Step (1) is the ``means" step, and (2) is the maximization step. 

Let 
\begin{equation}
	C = \sum_{l = 1}^{L} \sum_{i \in P_l} d_i \cdot D(w_i, \mu_l)
\end{equation}
be the associated cost. As with the usual k-means, we have the following 
\begin{lem} 
\label{lem:monotonicity}
Either $P$ is unchanged by steps (1) and (2) or both steps (1) and (2) strictly increase the value of $C$. 
\end{lem}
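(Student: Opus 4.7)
The plan is to view the two updates as alternating block-coordinate maximization of a single auxiliary function
$$\tilde{C}(P, \nu_1, \ldots, \nu_k) = \sum_{s=1}^k \sum_{i \in P_s} d_i\, D(w_i, \nu_s),$$
where $P$ ranges over partitions of $V$ and the $\nu_s$ over probability measures on $V$. By definition, the cost $C$ in the lemma equals $\tilde{C}$ at the current $(P,\mu_1,\ldots,\mu_k)$; step (1) freezes $P$ and updates the $\nu_s$, and step (2) freezes the $\nu_s$ and updates $P$.

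I would then prove two block-optimality statements. For step (2), with the $\nu_s$ fixed, $\tilde{C}$ decomposes as $\sum_i d_i D(w_i, \nu_{s(i)})$, which is maximized over assignments $s(\cdot)$ by $s(i) = \argmax_l D(w_i, \nu_l)$; hence step (2) weakly increases $\tilde{C}$, strictly whenever it actually changes the partition. For step (1), fix $P_s$, write $c_j = \sum_{i \in P_s} d_i w_i(j)$, so that $\sum_j c_j = d_{P_s}$ and $\mu_{P_s}(j) = c_j/d_{P_s}$, and observe
$$\sum_{i \in P_s} d_i\, D(w_i, \nu_s) = \sum_j c_j \log \nu_s(j) = d_{P_s} \sum_j \mu_{P_s}(j) \log \nu_s(j).$$
Gibbs' inequality (nonnegativity of $KL(\mu_{P_s}\,\|\,\nu_s)$, with equality iff $\nu_s = \mu_{P_s}$ on the support of $\mu_{P_s}$) identifies $\mu_{P_s}$ as the unique maximizer, so step (1) weakly increases $\tilde{C}$, strictly whenever the new means differ from the old.

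Combining these, a full iteration weakly increases $C$, and for the strict statement one uses the structure of the loop. Whenever the algorithm has not already terminated, the means entering step (1) come from the previous iteration's partition, which differs from the current one, so Gibbs' uniqueness forces step (1) to strictly increase $C$; if the current iteration further changes the partition, the argmax optimality then forces step (2) to strictly increase $C$ as well. Thus either $P$ is unchanged across the iteration (and the algorithm halts) or both steps contribute strict improvements. The only delicate point I expect to handle is the bookkeeping that tracks which $\nu_s$ is current at each sub-step and verifies that a partition change between successive iterations genuinely forces a change in the corresponding means; once that is set up, the analytic content reduces to a single application of Gibbs' inequality plus a trivial argmax.
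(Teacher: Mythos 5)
Your proposal is correct and follows essentially the same route as the paper's proof: step (2) is handled by the trivial optimality of the argmax assignment, and step (1) by Gibbs' inequality (nonnegativity of the Kullback--Leibler divergence) identifying the cluster mean as the unique maximizer of $\nu_s \mapsto \sum_{i\in P_s} d_i D(w_i,\nu_s)$. The only difference is cosmetic: you carry the degree weights $d_i$ explicitly via the $c_j$, whereas the paper states its key fact for unweighted averages, but the underlying calculation is the same.
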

The proof is by direct computation and is deferred to the supplementary material. Since the number of configurations $P$ is finite, it follows that DER always terminates and provides a ``local maximum" of the cost $C$.

  The cost $C$ can be rewritten in a somewhat more informative form. To do so, we introduce some notation first. 
Let $X$ be a random variable on $V$, distributed according to measure $\pi$. Let $Y$ a step of a random walk started at $X$, so that the distribution of $Y$ given $X = i$ is $w_i$. Finally, for a partition $P$, let $Z$ be the indicator variable of a partition, $Z = s$ iff $X \in P_s$.  With this notation, one can write 
\begin{equation}
\label{algo_info_eq}
C = -d_V \cdot H(Y |Z ) = d_V \left( -H(Y) + H(Z) - H(Z|Y) \right),
\end{equation}
where $H$ are the full and conditional Shannon entropies.  Therefore, DER algorithm can be interpreted as 
seeking 
a partition that maximizes the information between current known state ($Z$), and the next step from it ($Y$). This interpretation gives rise to the name of the algorithm, DER, since every iteration reduces the entropy $H(Y|Z)$ of the random walk, or diffusion, with respect to the partition. 
The second equality in (\ref{algo_info_eq}) has another interesting interpretation. Suppose, for simplicity, 
that 
$k=2$, with partition $P_1,P_2$. In general, a clustering algorithm aims to minimize the cut, the number of 
edges 
between $P_1$ and $P_2$. However, minimizing the number of edges directly will lead to situations where $P_1$ 
is 
a single node, connected with one edge to the rest of the graph in $P_2$. To avoid such situation, a relative, 
normalized version of a cut needs to be introduced, which takes into account the sizes of $P_1,P_2$. Every 
clustering algorithms has a way to resolve this issue, implicitly or explicitly. For DER, this is shown in 
second equality of (\ref{algo_info_eq}). $H(Z)$ is maximized when the components are of equal sizes (with 
respect to 
$\pi$), while $H(Z|Y)$ is minimized when the measures $\mu_{P_s}$ are as disjointly supported as possible.

As any $k$-means algorithm, DER's results depend somewhat on its random 
initialization. All $k$-means-like schemes are usually restarted several times and the solution with the best 
cost is chosen. In all cases which we evaluated we observed empirically that the dependence of DER on the 
initial parameters is rather weak. After two or three restarts it usually found a partition nearly as good as 
after 100 restarts. For clustering problems, however, there is another simple way to aggregate the results of 
multiple runs into a single partition, which slightly improves the quality of the final results. We use this 
technique in all our experiments and we provide the details in the Supplementary Material, Section \AppSecRestarts.
%\ref{sec:repeats_and_restarts}. 

We conclude by mentioning two algorithms that use some of the concepts that we use.  The Walktrap, \cite{Pons}, 
similarly to DER constructs the random walks (the measures $w_i$, possibly for $L>1$) as part of its 
computation. However, Walktrap uses $w_i$'s in a completely different way. Both the 
optimization procedure and the cost function are different from ours.  The Infomap , \cite{Infomap}, \cite{InfomapOvr}, has 
a cost that is related to the notion of information. It aims to minimize to the information required to 
transmit a random walk on $G$ through a channel, the source coding is constructed using the clusters, and best 
clusters are those that yield the best compression. This does not seem to be directly connected to the maximum 
likelyhood motivated approach that we use. As with Walktrap, the optimization procedure of Infomap also 
completely differs from ours. 

\section{Evaluation}
\label{empirical_sec}
   In this section results of the evaluation of DER algorithm are presented. 
In Section \ref{small_sec} we illustrate DER on two classical graphs. Sections \ref{lfrbno_sec} and \ref{sec:overlap_lfr_bench} contain the evaluation on the LFR benchmarks.

\subsection{Basic examples}
\label{small_sec}

\begin{figure}
\centering
\begin{subfigure}{.5\textwidth}
  \centering
  \includegraphics[width=\textwidth]{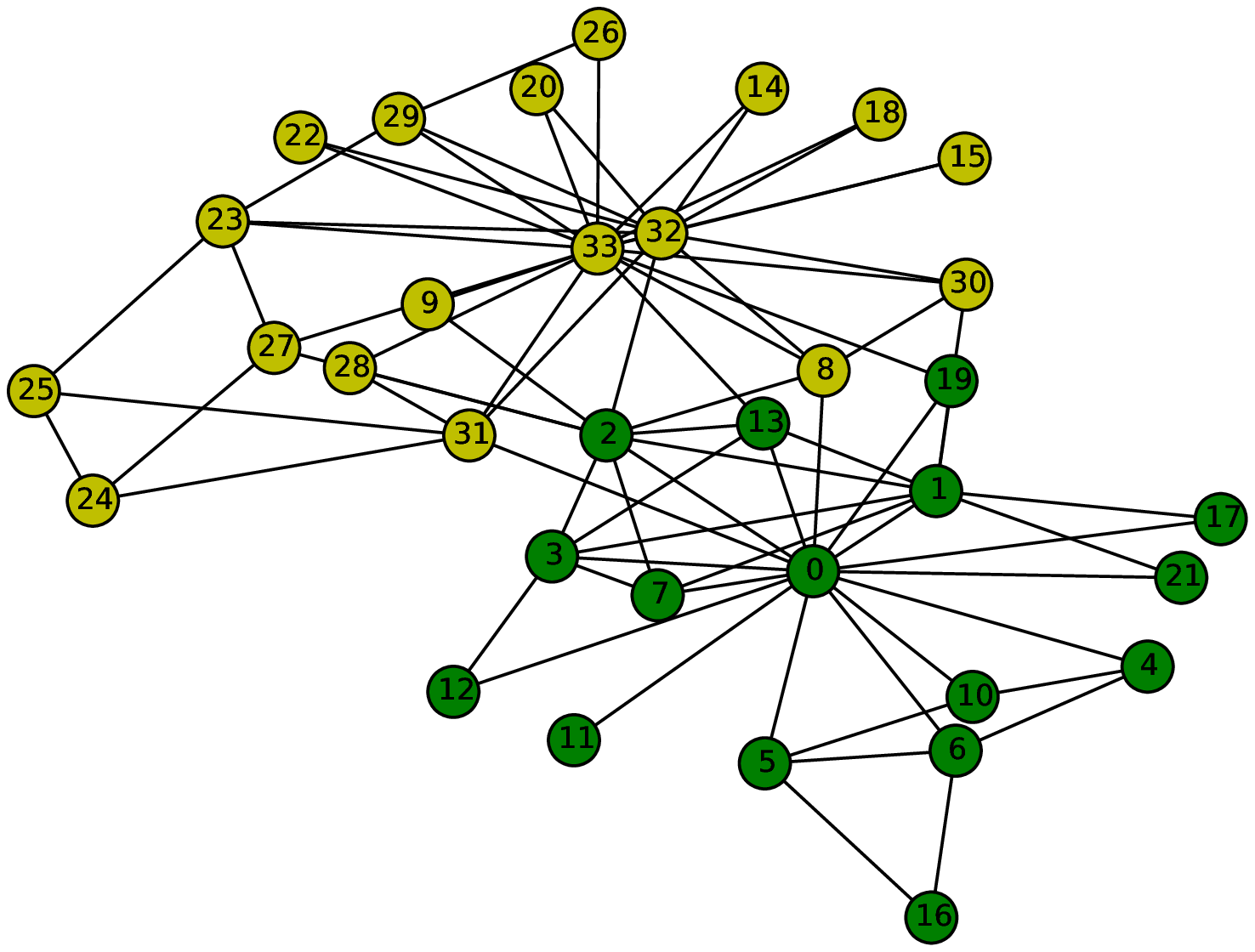}
  \caption{Karate Club}
  \label{fig:karate}
\end{subfigure}%
\begin{subfigure}{.5\textwidth}
  \centering
  \includegraphics[width=\textwidth]{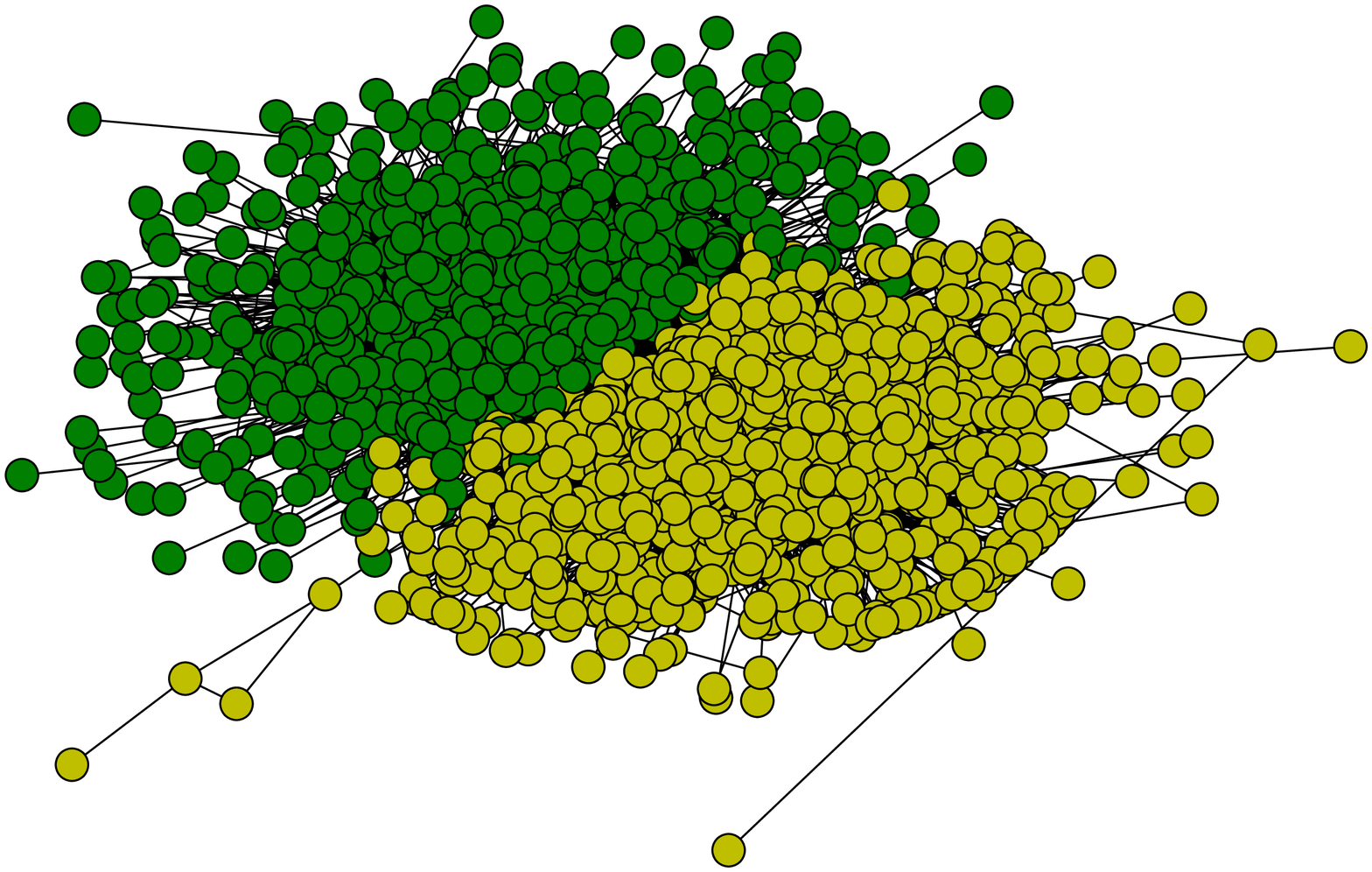}
  \caption{Political Blogs}
  \label{fig:political_blogs}
\end{subfigure}
\end{figure}

\iffalse
\begin{figure}[ht]
\vskip 0.2in
\begin{center}
\centerline{\includegraphics[width=\columnwidth]{karate}}
\caption{Karate Club Graph}
\label{fig:karate}
\end{center}
\vskip -0.2in
\end{figure} 
\fi

When a new clustering algorithm is introduced, it is useful to get a general feel of it with some simple 
examples. Figure~\ref{fig:karate} shows the classical Zachary's Karate Club, \cite{Zachary}. This graph has a 
ground partition into two subsets. The partition shown in Figure \ref{fig:karate} is a partition obtained from 
a typical run of DER algorithm, with $k=2$, and wide range of  $L$'s. ($L \in [1,10]$ were tested). As is the 
case with many other clustering algorithms, the shown partition differs from the ground partition in one 
element, node $8$ (see \cite{Fortunato201075}).

\iffalse
\begin{figure}[ht]
\vskip 0.2in
\begin{center}
\centerline{\includegraphics[width=\columnwidth]{political_blogs}}
\caption{Political Blogs graph, $L=5$ }
\label{fig:political_blogs}
\end{center}
\vskip -0.2in
\end{figure} 
\fi
 
Figure~\ref{fig:political_blogs} shows the political blogs graph, \cite{PolBlogs}.  The nodes are political 
blogs, and the graph has an (undirected) edge if one of the blogs had a link to the other. There are 1222 
nodes in the graph. The ground truth partition of this graph has two components - the right wing and left wing 
blogs. The labeling of the ground truth was partially automatic and partially manual, and both processes could 
introduce some errors. The run of DER reconstructs the ground truth partition with only 57 nodes 
missclassifed. The NMI (see the next section, Eq. (\ref{eq:nmi_def})) to the ground truth partition is $.74$. 

The political blogs graphs is particularly interesting since it is an example of a graph for which fitting an SBM model to reconstruct the clusters produces results very different from the ground truth. It can also be easily checked that spectral clustering, in form given in \cite{NJW}, is not close to ground truth when $k=2$. It is 
close to ground truth when $k=3$, however. To overcome the problem with SBM fitting on this graph, a degree sensitive version of SBM was introduced in ~\cite{NewmanSBM}. That algorithm produces partition with NMI $.75$.

\subsection{LFR benchmarks}
\label{lfrbno_sec}

The LFR benchmark model, \cite{LFRBench}, is a widely used extension of the stochastic block model, where node degrees 
and community sizes have power law distribution, as often observed in real graphs. An important parameter of 
this model is the mixing parameter $\mu \in [0,1]$ that controls the fraction of the edges of a node that 
go outside the node's community (or outside all of node's communities, in the overlapping case). For small 
$\mu$, there will be a small number of edges going outside the communities, leading to disjoint, easily 
separable graphs, and the boundaries between communities will become less pronounced as $\mu$ grows.  

Given a set of communities $P$ on a graph, and the ground truth set of communities $Q$, there are several 
ways to measure how close $P$ is to $Q$. One standard measure is the normalized mutual information (NMI), given by:
\begin{equation}
\label{eq:nmi_def}
	NMI(P,Q) = 2\frac{I(P,Q)}{H(P) + H(Q)},
\end{equation}
where $H$ is the Shannon entropy of a partition and $I$ is the mutual information (see \cite{Fortunato201075} 
for details). NMI is equal $1$ if and only if the partitions $P$ and $Q$ coincide, and it takes values between $0$ and $1$ otherwise. 

When computed with NMI, the sets inside $P,Q$ can not overlap. To deal with overlapping communities, an extension of NMI was 
proposed in \cite{LFK_nmi}. We refer to the original paper for the definition, as the definition is somewhat 
lengthy. This extension, which we denote here as ENMI, was subsequently used in the literature as a measure 
of closeness of two sets of communities, event in the cases of disjoint communities. Note that most papers 
use the notation NMI while the metric that they really use is ENMI. 

Figure \ref{fig:lfr_nmi} shows the results of evaluation of DER for four cases: 
the size of a graph was either $N=1000$ or $N=5000$ nodes, and the size of the communities was restricted 
to be either between $10$ to $50$ (denoted $S$ in the figures) or between $20$ to $100$ (denoted $B$). For 
each combination of these parameters, $\mu$ varied between $0.1$ and $0.8$. For each combination of graph 
size, community size restrictions as above and $\mu$ value, we generated 20 graphs from that model and run 
DER. To provide some basic intuition about these graphs, we note that the number of communities in the 1000S 
graphs is strongly concentrated around 40, and in 1000B, 5000S, and 5000B graphs it is around 25, 200 and 100 
respectively. Each point in Figure \ref{fig:lfr_nmi} is a the average ENMI on the 20 corresponding graphs, with standard deviation as the error bar. These experiments correspond precisely to the ones performed 
in \cite{LFComp} (see Supplementary Material, Section \AppSecLFR for more details). In all runs on DER we 
have set L = 5 and set $k$ to be the true number of communities for each graph, as was done in \cite{LFComp} for the methods that required it. Therefore our Figure \ref{fig:lfr_nmi} can be compared directly with Figure 2 in \cite{LFComp}.

\begin{figure}
\centering
\begin{subfigure}{.5\textwidth}
    \centering
	\centerline{\includegraphics[width=6cm, height=4cm]{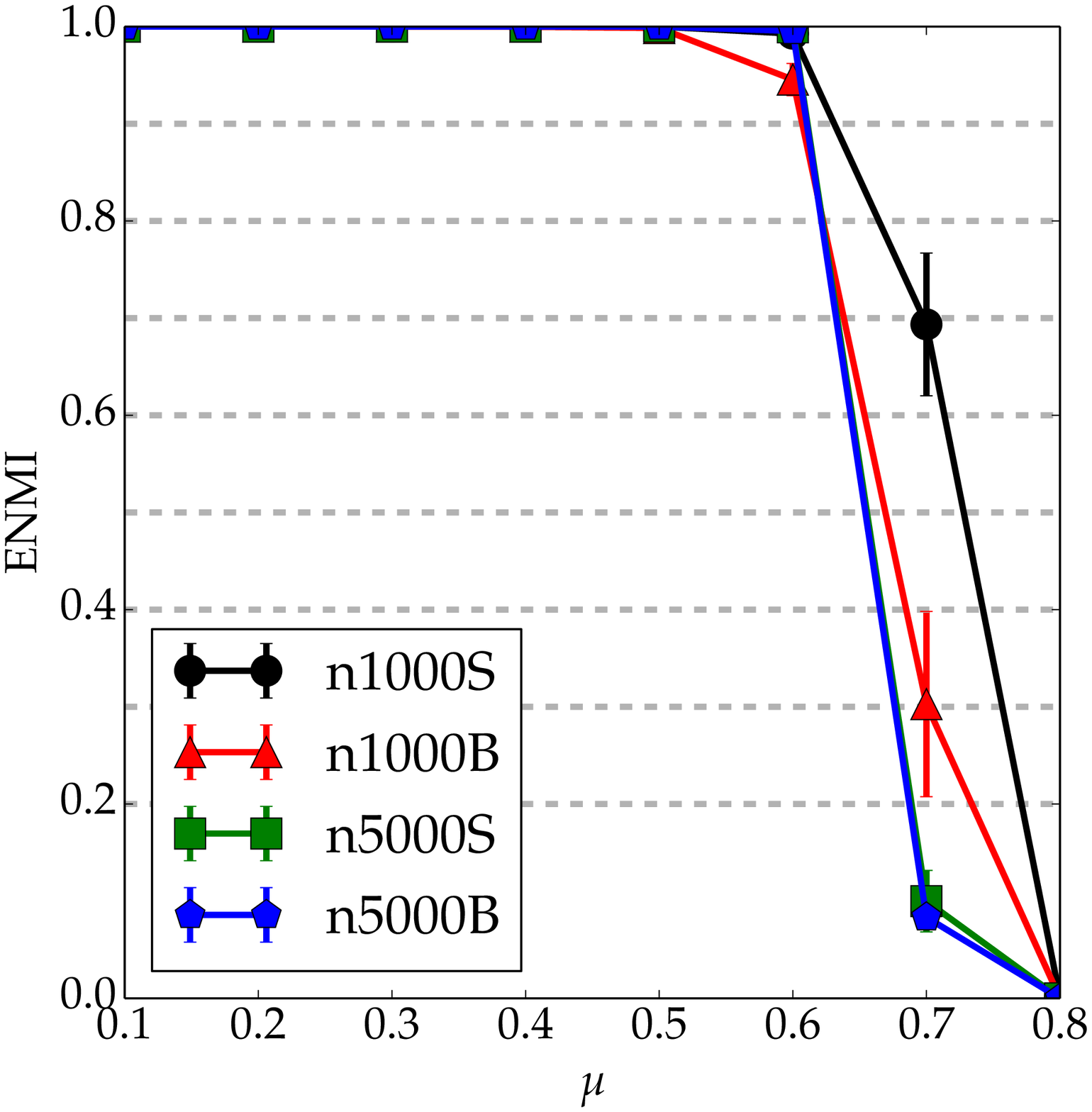}}
	\caption{DER, LFR benchmarks}
	\label{fig:lfr_nmi}
\end{subfigure}%
\begin{subfigure}{.5\textwidth}
	\centering
	\includegraphics[width=6cm, height=4cm]{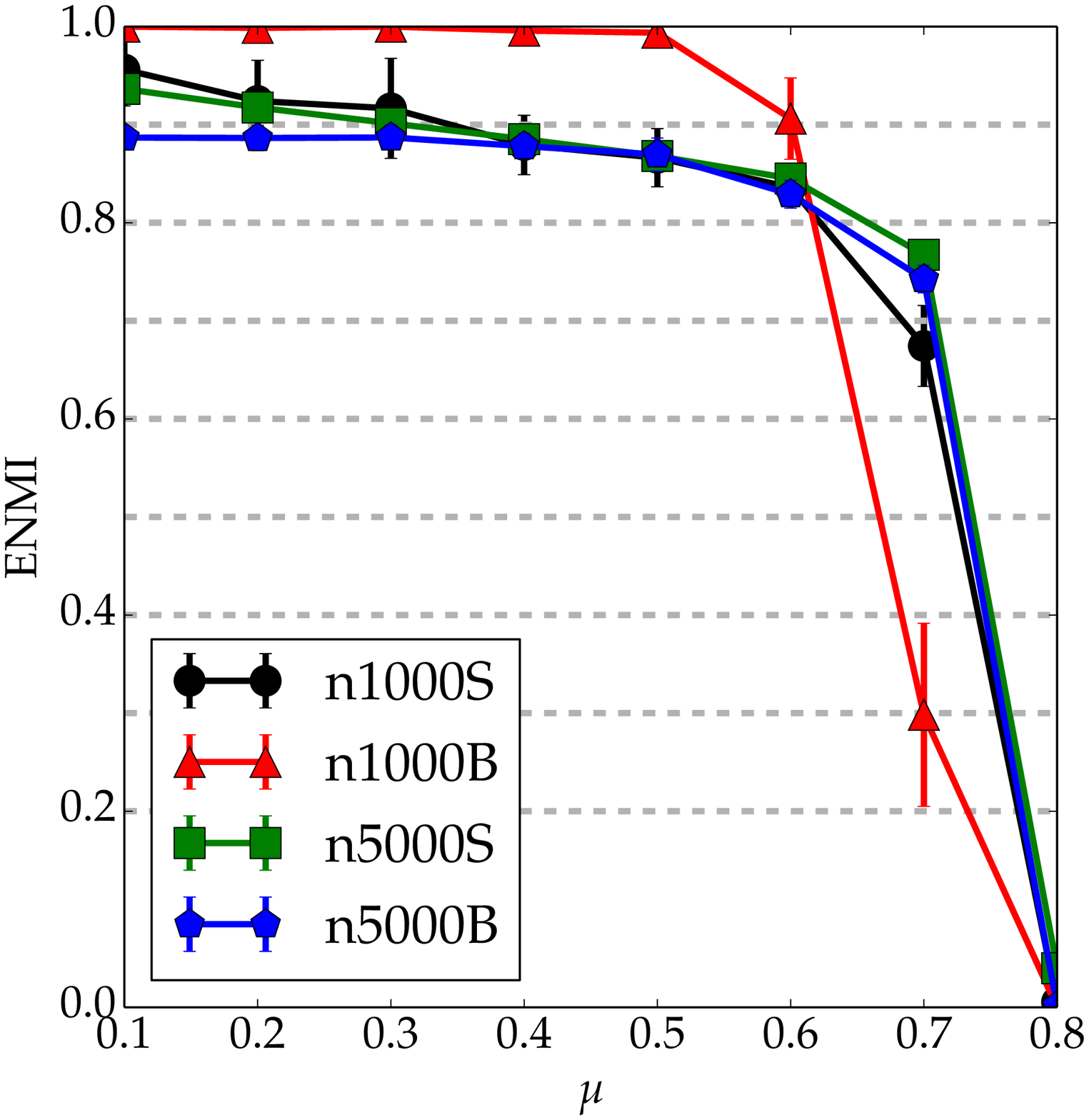}
	\caption{Spectral Alg., LFR benchmarks}
	\label{fig:spec_lfr_enmi}
\end{subfigure}
\end{figure}

From this comparison we see that DER and the two of the best algorithms identified in \cite{LFComp}, Infomap 
\cite{Infomap} and RN \cite{RN_method}, reconstruct the partition perfectly for $\mu \leq 0.5$, for $\mu = 
0.6$ DER's reconstruction scores are between Infomap's and RN's, with values for all of the algorithms above 
$0.95$, and for $\mu = 0.7$ DER has the best performance in two of the four cases. For $\mu = 0.8$ all 
algorithms have score $0$. 

We have also performed the same experiments with the standard version of spectral clustering, \cite{NJW}, 
because this version was not evaluated in \cite{LFComp}. The results are shown in Fig. 
\ref{fig:spec_lfr_enmi}. Although the performance is generally good, the scores are mostly lower than those 
of DER, Infomap and RN. 

\subsection{Overlapping LFR benchmarks}

\label{sec:overlap_lfr_bench}

We now describe how DER can be applied to overlapping community detection.  Observe that DER internally operates on measures $\mu_{P_s}$ rather then subsets of the vertex set. Recall 
that $\mu_{P_s}(i)$ is the probability that a random walk started from $P_s$ will hit node $i$. We can therefore consider each $i$ to be a member of those communities from which the probability to hit it is ``high enough".  To define this formally, we first note that for any partition $P$, the following decomposition holds:
\begin{equation}
\label{algo_meas_dec}
	\pi = \sum_{s=1}^k \pi(P_s) \mu_{P_s}.
\end{equation}
This follows from the invariance of $\pi$ under the random walk. Now, given the out put of DER - the sets $P_s$ and measures $\mu_{P_s}$ set
\begin{equation}
\label{algo_community_membership_meas}
	m_i(s) = \frac{\mu_{P_s}(i) \pi(P_s)}{\sum_{t =1}^k \mu_{P_t}(i) \pi(P_t)} = \frac{\mu_{P_s}(i) \pi(P_s)}{\pi(i)}, 
\end{equation}
where we used (\ref{algo_meas_dec}) in the second equality. Then $m_i(s)$ is the probability that the walks started at $P_s$, given that it finished in $i$. For each $i \in V$, set $s_i = \argmax_l m_i(l)$ to be the most likely community given $i$. Then define the overlapping communities $C_1,\ldots,C_k$ via
\begin{equation}
	\label{eq:ovlp_comm_choice}
   		C_t = \left\{ i \in V \setsep m_i(t) \geq \half \cdot m_i(s_i) \right\}.
\end{equation}

The paper \cite{GopBlei} introduces a new algorithm for overlapping communities detection and contains also 
an evaluation of that algorithm as well as of several other algorithms on a set of overlapping LFR 
benchmarks. The overlapping communities LFR model was defined in \cite{LFBench}. In Table \ref{tab:eval_settings} we present the ENMI results of DER runs on the $N=10000$ graphs 
with same parameters as in \cite{GopBlei}, and also show the values obtained on these benchmarks in 
\cite{GopBlei} (Figure S4 in \cite{GopBlei}), for four other algorithms. The DER algorithm was run with $L=2$, and $k$ 
was set to the true number of communities. Each number is an average over ENMIs on 10 instances of graphs with 
a given set of parameters (as in \cite{GopBlei}).  The standard deviation around this average for DER was less then 
$0.02$ in all cases. Variances for other algorithms are provided in \cite{GopBlei}. 

\begin{table}
\centering
\caption{Evaluation for Overlapping LFR. All values except DER are from \cite{GopBlei}}
\label{tab:eval_settings}
\begin{tabular}{|l|l|l|l|} \hline
Alg. & $\mu = 0$ & $\mu = 0.2$ & $\mu=0.4$ \\ \hline
DER & \textbf{0.94} & \textbf{0.9} & \textbf{0.83} \\ \hline
SVI (\cite{GopBlei}) & 0.89 & 0.73 & 0.6 \\ \hline
POI (\cite{NewmanPoi})& 0.86 & 0.68 & 0.55 \\ \hline
INF (\cite{InfomapOvr})& 0.42 & 0.38 & 0.4 \\ \hline
COP (\cite{Greg})& 0.65 & 0.43 & 0.0 \\ \hline
\end{tabular}
\end{table}

For $\mu \geq 0.6$ all algorithms yield ENMI of less then $0.3$. As we see in Table \ref{tab:eval_settings}, 
DER performs better than all other algorithms in all the cases. We believe this indicates that DER together 
with equation (\ref{eq:ovlp_comm_choice}) is a good choice for overlapping community detection in situations 
where community overlap between each two communities is sparse, as is the case in the LFR models considered 
above. Further discussion is provided in the Supplementary Material, Section 
\AppSecLFROverlap.

We conclude this section by noting that while in the non-overlapping case the models generated with $\mu=0$
result in trivial community detection problems, because in these cases communities are simply the connected components of the graph, this is no longer true in the overlapping case.  As a point of reference, the well known Clique 
Percolation method was also evaluated in \cite{GopBlei}, in the $\mu=0$ case. The average 
ENMI for this algorithm was $0.2$ (Table S3 in \cite{GopBlei}).

\section{Analytic bounds}
\label{analytic_sec}

\newcommand{\rgraph}[1]{\mathcal{#1}}
\newcommand{\BRK}[1]{\left(#1\right)}
\newcommand{\Bin}[2]{Bin\left(#1,#2\right)}

In this section we restrict our attention to the case $L=1$ of the DER algorithm. 
Recall that the $p,q$-SBM model was defined in Section \ref{lit_sec}. We shall consider the model with $k=2$ 
and such that  $|P_1| =|P_2|$.  We assume that the initial partition for the DER, denoted $C_1,C_2$  
in what follows, is chosen as in step 3 of DER (Algorithm \ref{alg:noc}) - a random partition of $V$ into two 
equal sized subsets. 

In this setting we have the following:
\begin{thm}
\label{thm:pq_bound}
For every $\epsilon >0$ there exists $C > 0$ and $c>0$ such that if
\begin{equation}
\label{eq:p_size}
	p \geq  C \cdot N^{-\half + \epsilon}
\end{equation}
and 
\begin{equation}
\label{eq:q_size}
	p-q \geq c \sqrt{p N^{-\half +  \epsilon} \log N}
\end{equation}
then DER recovers the partition $P_1,P_2$ after one iteration, with probability $\phi(N)$ such that $\phi(N) \rightarrow 1$ when $N \rightarrow \infty$. 
\end{thm}

Note that the probability in the conclusion of the theorem refers to a joint probability of a draw from the 
SBM and of an independent draw from the random initialization. 

The proof of the theorem has essentially three steps. First, we observe that the random initialization 
$C_1,C_2$ is necessarily somewhat biased,  in the sense that $C_1$ and $C_2$ never divide $P_1$ exactly into 
two halves. Specifically, $\left| |C_1 \cap P_1| - |C_2 \cap P_1| \right| \geq N^{-\half - \epsilon}$ with 
high probability. Assume that $C_1$ has the bigger half, $|C_1 \cap P_1| > |C_2 \cap P_1|$. In the second 
step, by an appropriate linearization argument we show that for a node $i \in P_1$, deciding whether $D(w_i,
\mu_{C_1}) > D(w_i,\mu_{C_2})$ or vice versa amounts to counting paths of length two between $i$ and $|C_1 
\cap P_1|$. In the third step we estimate the number of these length two paths in the model. The fact that $|
C_1 \cap P_1| > |C_2 \cap P_1| + N^{-\half - \epsilon}$ will imply more paths to $C_1 \cap P_1$ from $i \in 
P_1$ and we will conclude that $D(w_i,\mu_{C_1}) > D(w_i,\mu_{C_2})$ for all $i \in P_1$ and $D(w_i,\mu_{C_2}) 
> D(w_i,\mu_{C_1})$ for all $i \in P_2$. The full proof is provided in the supplementary 
material.

We note that the use of paths of length two is essential for the argument to work. Similar argument with 
paths of length one (edges) will not work (unless $p$ is of the order of a constant). However, we also note 
that paths of length two are never explicitly computed, as this would require squaring the adjacency matrix. 
Instead, this is achieved by considering paths of length one from the target set $C_1$ (via $\mu_{C_1}$) and 
paths of length one from the nodes (via $w_i$).

\bibliographystyle{unsrt}
\bibliography{communities}

\begin{thebibliography}{10}

\bibitem{Fortunato201075}
Santo Fortunato.
\newblock Community detection in graphs.
\newblock {\em Physics Reports}, 486(3–5):75 -- 174, 2010.

\bibitem{vL}
Ulrike Luxburg.
\newblock A tutorial on spectral clustering.
\newblock {\em Statistics and Computing}, 17(4):395--416, 2007.

\bibitem{LFBench}
Andrea Lancichinetti and Santo Fortunato.
\newblock Benchmarks for testing community detection algorithms on directed and
  weighted graphs with overlapping communities.
\newblock {\em Phys. Rev. E}, 80(1):016118, 2009.

\bibitem{LFComp}
Santo Fortunato and Andrea Lancichinetti.
\newblock Community detection algorithms: A comparative analysis.
\newblock In {\em Fourth International ICST Conference}, 2009.

\bibitem{Infomap}
M.~Rosvall and C.~T. Bergstrom.
\newblock Maps of random walks on complex networks reveal community structure.
\newblock {\em Proc. Natl. Acad. Sci. USA}, page 1118, 2008.

\bibitem{RN_method}
Peter Ronhovde and Zohar Nussinov.
\newblock Multiresolution community detection for megascale networks by
  information-based replica correlations.
\newblock {\em Phys. Rev. E}, 80, 2009.

\bibitem{blondel2008}
Vincent~D Blondel, Jean-Loup Guillaume, Renaud Lambiotte, and Etienne Lefebvre.
\newblock Fast unfolding of communities in large networks.
\newblock {\em Journal of Statistical Mechanics: Theory and Experiment},
  2008(10), 2008.

\bibitem{NJW}
Andrew~Y. Ng, Michael~I. Jordan, and Yair Weiss.
\newblock On spectral clustering: Analysis and an algorithm.
\newblock In {\em Advances in Neural Information Processing Systems 14}, 2001.

\bibitem{cfinder}
Gergely Palla, Imre Der{\'e}nyi, Ill{\'e}s Farkas, and Tam{\'a}s Vicsek.
\newblock Uncovering the overlapping community structure of complex networks in
  nature and society.
\newblock {\em Nature}, 435, 2005.

\bibitem{GopBlei}
Prem~K Gopalan and David~M Blei.
\newblock Efficient discovery of overlapping communities in massive networks.
\newblock {\em Proceedings of the National Academy of Sciences},
  110(36):14534--14539, 2013.

\bibitem{GNFoot}
M.~Girvan and M.~E.~J. Newman.
\newblock Community structure in social and biological networks.
\newblock {\em Proceedings of the National Academy of Sciences},
  99(12):7821--7826, 2002.

\bibitem{newman2007mixture}
MEJ Newman and EA~Leicht.
\newblock Mixture models and exploratory analysis in networks.
\newblock {\em Proceedings of the National Academy of Sciences}, 104(23):9564,
  2007.

\bibitem{PPM}
Paul~W. Holland, Kathryn~B. Laskey, and Samuel Leinhardt.
\newblock {Stochastic blockmodels: First steps}.
\newblock {\em Social Networks}, 5(2):109--137, 1983.

\bibitem{LFRBench}
Andrea Lancichinetti, Santo Fortunato, and Filippo Radicchi.
\newblock Benchmark graphs for testing community detection algorithms.
\newblock {\em Phys. Rev. E}, 78(4), 2008.

\bibitem{anandkumar_tensor}
Animashree Anandkumar, Rong Ge, Daniel Hsu, and Sham Kakade.
\newblock A tensor spectral approach to learning mixed membership community
  models.
\newblock In {\em COLT}, volume~30 of {\em JMLR Proceedings}, 2013.

\bibitem{Chen}
Yudong Chen, S.~Sanghavi, and Huan Xu.
\newblock Improved graph clustering.
\newblock {\em Information Theory, IEEE Transactions on}, 60(10):6440--6455,
  Oct 2014.

\bibitem{Bopanna}
Ravi~B. Boppana.
\newblock Eigenvalues and graph bisection: An average-case analysis.
\newblock In {\em Foundations of Computer Science, 1987., 28th Annual Symposium
  on}, pages 280--285, Oct 1987.

\bibitem{SBB}
Anne Condon and Richard~M. Karp.
\newblock Algorithms for graph partitioning on the planted partition model.
\newblock {\em Random Struct. Algorithms}, 18(2):116--140, 2001.

\bibitem{ShamirTsur}
Ron Shamir and Dekel Tsur.
\newblock Improved algorithms for the random cluster graph model.
\newblock {\em Random Struct. Algorithms}, 31(4):418--449, 2007.

\bibitem{Pons}
Pascal Pons and Matthieu Latapy.
\newblock Computing communities in large networks using random walks.
\newblock {\em J. of Graph Alg. and App.}, 10:284--293, 2004.

\bibitem{InfomapOvr}
Alcides Viamontes~Esquivel and Martin Rosvall.
\newblock Compression of flow can reveal overlapping-module organization in
  networks.
\newblock {\em Phys. Rev. X}, 1:021025, Dec 2011.

\bibitem{Zachary}
W.~W. Zachary.
\newblock An information flow model for conflict and fission in small groups.
\newblock {\em Journal of Anthropological Research}, 33:452--473, 1977.

\bibitem{PolBlogs}
Lada~A. Adamic and Natalie Glance.
\newblock The political blogosphere and the 2004 {U.S.} election: Divided they
  blog.
\newblock LinkKDD '05, 2005.

\bibitem{NewmanSBM}
Brian Karrer and M.~E.~J. Newman.
\newblock Stochastic blockmodels and community structure in networks.
\newblock {\em Phys. Rev. E}, 83, 2011.

\bibitem{LFK_nmi}
Andrea Lancichinetti, Santo Fortunato, and János Kertész.
\newblock Detecting the overlapping and hierarchical community structure in
  complex networks.
\newblock {\em New Journal of Physics}, 11(3):033015, 2009.

\bibitem{NewmanPoi}
Brian Ball, Brian Karrer, and M.~E.~J. Newman.
\newblock Efficient and principled method for detecting communities in
  networks.
\newblock {\em Phys. Rev. E}, 84:036103, Sep 2011.

\bibitem{Greg}
Steve Gregory.
\newblock Finding overlapping communities in networks by label propagation.
\newblock {\em New Journal of Physics}, 12(10):103018, 2010.

\bibitem{Cover2006}
Thomas~M. Cover and Joy~A. Thomas.
\newblock {\em Elements of Information Theory (Wiley Series in
  Telecommunications and Signal Processing)}.
\newblock Wiley-Interscience, 2006.

\bibitem{janson}
S.~Janson, T.~Luczak, and A.~Rucinski.
\newblock {\em Random Graphs}.
\newblock Wiley Series in Discrete Mathematics and Optimization. Wiley, 2011.

\bibitem{feller}
W.~Feller.
\newblock {\em An introduction to probability theory and its applications}.
\newblock Wiley series in probability and mathematical statistics: Probability
  and mathematical statistics. Wiley, 1971.

\bibitem{nicholson}
W.~L. Nicholson.
\newblock On the normal approximation to the hypergeometric distribution.
\newblock {\em Ann. Math. Statist.}, 27(2):471--483, 06 1956.

\bibitem{lfr_gen_site}
https://sites.google.com/site/santofortunato/inthepress2.

\bibitem{XieSurvey}
Jierui Xie, Stephen Kelley, and Boleslaw~K. Szymanski.
\newblock Overlapping community detection in networks: The state-of-the-art and
  comparative study.
\newblock {\em ACM Comput. Surv.}, 45(4):43:1--43:35, August 2013.

\end{thebibliography}

\newpage
\appendix

\section{Restarts and repeats}

\label{sec:repeats_and_restarts}

As any $k$-means algorithm, DER's results depend somewhat on its random initializations, and can be improved 
by multiple runs on the same instance with different initializations.  We refer to this as restarts of the 
algorithm. We have observed empirically the following behaviour of DER: Suppose a graph $G$ has a 
ground truth partition $P_1,\ldots,P_k$. Then the output of a typical restart of DER will be a partition 
$C_1,\ldots,C_k$ with the property that for each $C_i$,$i \leq k$, either there is $j \leq k$ such that 
$C_i = P_j$, or there are $j_1,j_2$ such that $C_i = P_{j_1} \cup P_{j_2}$ or there are $j$ and $l$ such that 
$C_i \cup C_l = P_j$. In other words, DER tends to either find the precise cluster, or to glue together two 
original clusters, or split an original cluster into two parts.  Usually most of the clusters will be found 
precisely, and there will be some small number of (usually small) clusters that are glued or splitted. Which 
clusters will be glued or splitted would depend on the random initialization. An simple way to deal with this 
is to use the following ``repeats" strategy: Choose a number of repeats, $R$ (say, $R=5$) and run DER $R$ 
times. Construct the node co-occurence matrix: 
\begin{equation}
\label{eq:co_oc_matrix}
	\hat{R_{ij}} = \mbox{ number of runs such that $i$ and $j$ appear in the same cluster.}
\end{equation}
for all $i,j \in V$. 

The matrix $\hat{R}$ can now be regarded as an adjacency matrix of a weighted graph and 
can be clustered itself. However, $\hat{R}$ will often have very clear clusters, which can be found using the 
following trivial threshold algorithm:  Define $T = \ceil{R/2}$. Initialize a set $U = V$.  Choose an 
arbitrary $i \in U$ and define a cluster $C$ by 
\[
	C = \SetBrack{ j \in U \setsep \hat{R}_{ij} \geq T }.
\]
Then output cluster $C$, set $U = U \setminus C$, choose a new $i \in U$ and repeat until $U$ is empty.

While on the benchmarks a single run of DER with a single restart usually has quite high precision, repeats 
are a more effective way to deal with glueing and splitting than the restarts. It is of course also possible 
to use more sophisticated but slower algorithms instead of the threshold one to cluster the co-occurence 
matrix $R$.

\section{Proofs}

\subsection{Lemma 3.1}
%\ref{lem:monotonicity}
\begin{proof}[Proof Of Lemma 3.1:]
The claim is obvious for step (2) of the algorithm. For step (1) the claim is implied by the following standard fact:  Let $\nu_1,\nu_2,\ldots,\nu_z$ be any finite collection of measures. Set $\tilde{\nu} = \frac{1}{z} \sum_{i} \nu_i$. 
Then for any measure $\kappa$, 
\begin{equation}
\label{eq:maxlikeopt} 
    \sum_{i = 1}^z D(\nu_i,\kappa) \leq \sum_{i = 1}^z D(\nu_i,\tilde{\nu}).
\end{equation}
Indeed, by rearranging terms in (\ref{eq:maxlikeopt}), we get
\begin{eqnarray*}
    \sum_{j \in V} \left(\sum_{i=1}^z \nu_i(j) \right) \left( 
        \log \tilde{\nu}(j)  - \log \kappa(j)
        \right) = \\
    z \cdot \sum_{j \in V} \tilde{\nu}(j) \left( 
        \log \frac{\tilde{\nu}(j)}{\kappa(j)}
        \right)  \geq 0 
\end{eqnarray*} 
which is the non-negativity of the Kullback-Leibler divergence ~\cite{Cover2006}, with equality iff $\kappa = \tilde{\nu}$. 
\end{proof}

\subsection{Main result}
\newcommand{\PapMainThm}{5.1}
\newcommand{\PapMainThmSection}{5}

%We now prove Theorem \ref{thm:pq_bound}. Recall that a general plan 
%of the proof was discussed in Section \ref{analytic_sec}. 
We now prove Theorem \PapMainThm, which we restate here for convenience. 

\begin{thm}
\label{thm:pq_bound}
For every $\epsilon >0$ there exists $C > 0$ and $c>0$ such that if
\begin{equation}
\label{eq:p_size}
	p \geq  C \cdot N^{-\half + \epsilon}
\end{equation}
and 
\begin{equation}
\label{eq:q_size}
	p-q \geq c \sqrt{p N^{-\half +  \epsilon} \log N}
\end{equation}
then DER recovers the partition $P_1,P_2$ after one iteration, with probability $\phi(N)$ such that $\phi(N) \rightarrow 1$ when $N \rightarrow \infty$. 
\end{thm}

Recall that a general plan of the proof was discussed in Section \PapMainThmSection. We proceed 
to implement that plan.  We start with stating some preliminaries. 
First, we state a version of Chernoff's bound for binomial variables.  
\begin{thm}[Theorem 2.1 in \cite{janson}]
\label{thm:chernoff_main} 
Let $X \sim Bin(n,p)$ be a binomial variable and set $\lambda = np$. Then for all $t\geq 0$,

\begin{equation}
\label{eq:bin_upper_tail}
\Prob{X \geq \Exp X  + t } \leq \exponent{ - \frac{t^2}{2(\lambda + t/3)} }
\end{equation}

\begin{equation}
\label{eq:bin_lower_tail}
\Prob{X \leq \Exp X  - t } \leq \exponent{ - \frac{t^2}{2\lambda} }
\end{equation}

\end{thm}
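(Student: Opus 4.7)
The plan is to use the standard Chernoff--Cramer exponential tilting argument. Write $X = \sum_{i=1}^{n} X_i$ with $X_i$ i.i.d.\ Bernoulli$(p)$. For the upper tail (\ref{eq:bin_upper_tail}), fix $s > 0$ and apply Markov's inequality to $e^{sX}$:
\[
\Prob{X \geq \lambda + t} \leq e^{-s(\lambda+t)} \Exp e^{sX} = e^{-s(\lambda + t)} \bigl(1 - p + p e^s\bigr)^n.
\]
Using $1 + x \leq e^x$ on $1 + p(e^s - 1)$ bounds the right-hand side by $\exponent{\lambda(e^s - 1) - s(\lambda + t)}$.

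The next step is to optimize. Minimizing the exponent over $s > 0$ is elementary and gives $s = \ln(1 + t/\lambda)$, yielding the sharp Chernoff form
\[
\Prob{X \geq \lambda + t} \leq \exponent{-\lambda \, h(t/\lambda)},
\]
where $h(x) = (1+x)\ln(1+x) - x$. To reach the stated form, I would then invoke the analytic inequality
\[
h(x) \geq \frac{x^2}{2 + 2x/3} \qquad \text{for all } x \geq 0,
\]
and substitute $x = t/\lambda$, multiplying through by $\lambda$ to obtain exactly $t^2/(2(\lambda + t/3))$ in the exponent.

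For the lower tail (\ref{eq:bin_lower_tail}), I would run the same construction with $s < 0$. The case $t > \lambda$ is trivial since then the event $\{X \leq \lambda - t\}$ has probability zero, so assume $0 \leq t \leq \lambda$. The analogous optimization yields $\Prob{X \leq \lambda - t} \leq \exponent{-\lambda \, h(-t/\lambda)}$, and the required reduction is the simpler inequality $h(-x) \geq x^2/2$ valid on $[0,1]$. Because no correction term is needed in the denominator, this follows directly from Taylor expansion of $(1-x)\ln(1-x) + x$, whose derivative $-\ln(1-x) - x$ is nonnegative on $[0,1)$ and vanishes at $0$.

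The main obstacle, and essentially the only real work, is verifying the analytic inequality $h(x) \geq x^2/(2 + 2x/3)$ for $x \geq 0$. The naive Taylor bound $h(x) \geq x^2/2$ is too strong to hold uniformly (it fails for large $x$, where $h$ grows only like $x \ln x$), which forces the $t/3$ correction in the denominator. The standard approach is to clear denominators and reduce the inequality to showing $(6 + 2x)[(1+x)\ln(1+x) - x] - 3x^2 \geq 0$, then verify this by differentiating twice and checking that the resulting function and its derivative both vanish at $x = 0$ while the second derivative is nonnegative. This is a routine but slightly finicky calculus exercise; everything else in the argument is mechanical.
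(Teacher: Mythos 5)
Your proof is correct and is essentially the canonical argument: the paper does not prove this statement at all but cites it verbatim as Theorem 2.1 of the referenced random-graphs text, and the exponential-tilting derivation you give, together with the reduction to the analytic inequality $(1+x)\ln(1+x)-x \geq x^2/(2+2x/3)$, is exactly the proof found there. The only blemish is expository: the function whose derivative is $-\ln(1-x)-x$ is $(1-x)\ln(1-x)+x-x^2/2$ rather than $(1-x)\ln(1-x)+x$ itself, and verifying the main inequality after clearing denominators actually requires three differentiations (the cleared form and its first two derivatives all vanish at $0$, with nonnegative third derivative $4x/(1+x)^2$), but neither point affects the validity of the argument.
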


In general given a binomial $X \sim Bin(n,p)$ we will often refer to $\lambda = np$ as $X$'s lambda.

The following Corollary will be useful. 
\begin{cor}[Corrolary 2.3 in \cite{janson}]
Let $X \sim Bin(n,p)$ be a binomial variable. Then for all $\eps \leq \frac{3}{2}$,
\begin{equation}
\label{eq:bin_exp_scale_tail}
\Prob{|X - \Exp X| \geq \eps \cdot \Exp X  } \leq 2 \exponent{ - \frac{\eps^2}{3}  \Exp X}
\end{equation}
\end{cor}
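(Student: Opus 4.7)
The plan is to obtain the stated multiplicative tail bound as a direct specialization of Theorem 2.1 (with $\lambda = \Exp X = np$) by substituting $t = \eps \lambda$ into both the upper and lower tail inequalities, simplifying the resulting exponents, and then combining the two one-sided events via a union bound to pick up the factor of $2$.

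For the upper tail, substituting $t = \eps \lambda$ into (\ref{eq:bin_upper_tail}) produces an exponent of the form
\[
-\frac{\eps^2 \lambda}{2 + 2\eps/3}.
\]
I want to bound this above by $-\eps^2 \lambda / 3$, which requires $2 + 2\eps/3 \leq 3$. This is equivalent to the hypothesis $\eps \leq 3/2$, and that is the only place in the derivation where the hypothesis is actually used. For the lower tail, the same substitution into (\ref{eq:bin_lower_tail}) gives exponent $-\eps^2 \lambda / 2$, which is automatically at most $-\eps^2 \lambda / 3$ with no restriction on $\eps$ at all. Adding the two one-sided probabilities (union bound) then supplies the leading factor of $2$ in (\ref{eq:bin_exp_scale_tail}).

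The ``main obstacle'' here is purely cosmetic: one has to notice that $3$ in the denominator of the exponent is engineered precisely so that the upper-tail inequality becomes an equality at the endpoint $\eps = 3/2$, while the lower tail is slack by a factor of $3/2$ throughout. This is why the same constant works for both tails and why the cut-off $\eps \leq 3/2$ is the natural range for a clean multiplicative statement. No probabilistic input beyond Theorem 2.1 itself is needed.
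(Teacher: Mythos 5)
Your derivation is correct and is exactly the standard way Corollary 2.3 follows from Theorem 2.1 (the paper itself does not prove this statement but simply quotes it from \cite{janson}, where the same substitution $t = \eps\lambda$ and union bound are used). The identification of $\eps \leq 3/2$ as precisely the condition $2 + 2\eps/3 \leq 3$ needed for the upper tail, with the lower tail being slack, is accurate.
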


We will also often use the following Corollary of Theorem 
\ref{thm:chernoff_main}. 
\begin{cor}
\label{cor:my_best_chernoff}
There is a constant $c>0$ such that the following holds: \\
Let $X \sim Bin(n,p)$ be a binomial variable such that $\lambda = np  > 1$. Then for any $N>0$, 
\begin{equation}
\label{eq:binomial_bound_var}
\Prob{  | X - \Exp X| \geq 20 \cdot \sqrt{\lambda} \cdot \log N } \leq 
  c / N^2.
\end{equation}
\end{cor}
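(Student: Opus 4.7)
The plan is to reduce the claim to a direct application of Theorem~\ref{thm:chernoff_main} with the choice $t = 20\sqrt{\lambda}\log N$, using a case split on the relative size of $t$ and $\lambda$. The upper and lower tails are bounded separately, and the constant $c$ is chosen at the end to absorb the trivial small-$N$ regime.

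For the lower tail, I would substitute the chosen $t$ directly into \eqref{eq:bin_lower_tail}, producing an exponent of $-t^2/(2\lambda) = -200\log^2 N$. For any $N$ bounded away from $1$ this decays much faster than $N^{-2}$, so the lower-tail contribution is negligible compared with the target bound.

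For the upper tail via \eqref{eq:bin_upper_tail}, the exponent is $-t^2/\bigl(2(\lambda + t/3)\bigr)$, and here the case split matters. If $t \leq \lambda$, then $\lambda + t/3 \leq 2\lambda$, and the exponent is at most $-t^2/(4\lambda) = -100\log^2 N$, matching the lower-tail decay. If instead $t > \lambda$, then $\lambda + t/3 \leq 4t/3$, so the exponent is at most $-3t/8 = -\tfrac{15}{2}\sqrt{\lambda}\log N$; using the hypothesis $\lambda > 1$ to lower-bound $\sqrt{\lambda}$ by $1$ gives a polynomial tail of order $N^{-15/2}$, comfortably smaller than $N^{-2}$.

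Summing the two tail bounds therefore yields a quantity at most a constant multiple of $N^{-2}$ on the range where $\log N$ is bounded away from $0$; for smaller $N$ the bound $c/N^{2}$ already exceeds $1$ once $c$ is large enough and so holds vacuously since probabilities are at most $1$. I do not foresee any genuine obstacle here; the corollary is essentially a repackaging of Theorem~\ref{thm:chernoff_main} with the threshold $t$ tuned so that the resulting Gaussian/sub-exponential tails become polynomial in $N$, and the only mild bookkeeping is the dichotomy between the sub-Gaussian regime ($t \leq \lambda$) and the sub-exponential regime ($t > \lambda$) on the upper side.
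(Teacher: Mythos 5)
Your proof is correct, and it follows the only natural route: the paper itself states this corollary without proof (as an immediate consequence of Theorem 2.1 of Janson et al.), and your argument---plugging $t = 20\sqrt{\lambda}\log N$ into both tail bounds, splitting the upper tail into the sub-Gaussian ($t\leq\lambda$) and sub-exponential ($t>\lambda$) regimes, and absorbing small $N$ into the constant $c$---is exactly the intended derivation. The constants check out ($200\log^2 N$, $100\log^2 N$, and $\tfrac{15}{2}\sqrt{\lambda}\log N \geq \tfrac{15}{2}\log N$ all dominate $2\log N$ for $N$ large enough), so nothing is missing.
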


\newcommand{\GPQ}{\rgraph{G}_{p,q}}
We now present a series of Lemmas about random graphs in the $p,q$-
SBM model and about random initializations. Throughout $G=(V,E)$ will 
be assumed to be a random graph from the $p,q$-SBM and we denote this 
$G \sim \GPQ$.  Recall that $N = |V|$ is the size of the node set, and for a node $i \in V$ in a fixed 
graph $G$, $n_i$ is the set of neighbours of $i$, and $d_i = |n_i|$ is 
the degree of $i$. Also, for a set $S \subset V$, its full degree is $d_S = \sum_{i \in S} d_i$. Next, for a set $S \subset V$, we denote by $d(i,S) = |n_i 
\cap S|$ the number of edges between $i$ and $S$ and for two 
sets, $S,T \subset V$ define $d(S,T) = \sum_{i \in S} d(i,T)$ to be the number of edges between $S$ and $T$. Finally, 
set $d_2(i,T) = d(n_i,T)$ to be the number of paths of length two 
that start at $i$ and end at $T$. 

In addition, let $C_1,C_2$, with $|C_1|=|C_2|=N/2$, be a random partition of $V$ into two sets, the initialization of DER.  
Denote $N_1 = |C_1 \cap P_1|$, and $N_2 = N/2 - N_1 = |C_1 \cap P_2| = |C_2 \cap P_1|$. We assume without loss of generality that $N_1 \geq N_2$, and set $\Delta N = N_1 - N_2$. The partition $C_1,C_2$ will be considered fixed in all the lemmas that concern the random graphs.

We proceed to give bounds on the expectations and concentration intervals of several quantities related to our problem. 

For a fixed node $i \in V$,  the degree $d_i$ is distributed as
a sum of two independent binomials, 
\begin{equation}
\label{eq:node_degree_decomposition}
 d_i \sim Bin(N/2 -1 ,p) + Bin(N/2,q), 
\end{equation}
the first term counts the edges to the component to which $i$ belongs, the second to the other component. In particular,
the expected degree is 
\begin{equation}
\label{eq:node_degree_expectation}
 \Exp d_i = (N/2 -1)p + (N/2)q.  
\end{equation}

\begin{lem}[Degree bounds]
\label{lem:total_deg_bound}
Let $G \sim \GPQ$.  There exists a constant $\hat{c}_1$ such that the following holds:
 Assume that 
\begin{equation}
 \label{eq:deg_nbd_lem_assum}
 Np \geq 100 \log N. 
\end{equation}
 Then with probability at least $1 - \hat{c}_1/N$,  for all $i \in V$ 
\begin{equation}
\label{eq:full_node_deg_bound}
  \quater \cdot \frac{N}{2} p \leq d_i \leq 2 \cdot N p.
\end{equation}
\end{lem}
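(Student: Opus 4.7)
The plan is to write each $d_i$ as a sum of independent Bernoullis, apply a multiplicative Chernoff bound to each, and then take a union bound over the $N$ vertices. By \eqref{eq:node_degree_decomposition}, $d_i$ is the sum of two independent binomials, so its mean $\lambda_i = \Exp d_i$ lies in the interval $\left[(N/2-1)p,\; Np\right]$ because $0 \leq q \leq p$. The hypothesis $Np \geq 100\log N$ ensures that $\lambda_i$ is comfortably larger than $\log N$, which is exactly what makes the Chernoff estimates effective.

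For the upper tail I would apply \eqref{eq:bin_exp_scale_tail} with $\eps = 1$ to $d_i$; since $\{d_i \geq 2Np\} \subseteq \{d_i \geq 2\lambda_i\}$, this immediately yields a failure probability at most $2 \exp(-\lambda_i/3)$, which the hypothesis makes polynomially small in $N$ (of order $N^{-8}$ or better). For the lower tail I would use the same bound with $\eps = 1/2$; since $\lambda_i \geq (N/2 - 1)p \geq Np/4$ for $N \geq 4$, the event $d_i \geq \lambda_i/2$ implies $d_i \geq Np/8 = \quater \cdot \frac{N}{2} p$, and the tail probability is again polynomially small in $N$.

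A union bound over the $N$ vertices then produces an overall failure probability of order $N^{-1}$, which gives the claim with a suitable constant $\hat{c}_1$. There is no serious obstacle here: the factor-of-8 slack between $Np/8$ and the conservative mean estimate $(N/2-1)p$ absorbs any small-$N$ edge cases, so the only thing to be mildly careful about is checking that the multiplicative deviation parameters stay within the range allowed by \eqref{eq:bin_exp_scale_tail}. Alternatively, one could invoke Corollary \ref{cor:my_best_chernoff} directly; under $Np \geq 100 \log N$ it implies $|d_i - \lambda_i| \leq \lambda_i / 2$ with probability $1 - O(N^{-2})$ per vertex, and yields the same conclusion after the union bound with essentially the same constants.
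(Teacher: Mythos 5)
Your proposal is correct and follows essentially the same route as the paper: decompose $d_i$ into the two independent binomials of \eqref{eq:node_degree_decomposition}, apply the multiplicative Chernoff bound \eqref{eq:bin_exp_scale_tail}, and finish with a union bound over the $N$ vertices. The only (immaterial) difference is that you apply the concentration bound to the full sum $d_i$ viewed as a sum of independent Bernoullis, whereas the paper bounds the two binomial summands $X$ and $Y$ separately and combines the events.
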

\begin{proof} 
Fixed a node $i \in V$, and let $X \sim Bin(N/2 -1 ,p)$ and  $Y \sim 
Bin(N/2,q)$ be two independent binomials such that $d_i \sim X+Y$. 
By applying (\ref{eq:bin_exp_scale_tail}) to $X$ with $\eps = \half$, 
we obtain that 
\begin{equation}
 \quater \frac{N}{2} p  \leq \Exp X - \half \Exp X \leq X < d_i 
\end{equation}
with probability at least $1 - 2exp(-\frac{1}{12} (\frac{N}{2}p -1))$.  Using the assumption (\ref{eq:deg_nbd_lem_assum}), it follows that there is $c>0$ such that $2exp(-\frac{1}{12} (\frac{N}{2}p -1)) \leq c/N^2$. Using the union bound we therefore conclude that 
\begin{equation}
 \quater \frac{N}{2} p  \leq d_i 
\end{equation}
holds for all nodes $i \in V$ with probability at least $1 - c/N$. 
Similarly, we use (\ref{eq:bin_exp_scale_tail}) to obtain 
that 
$X \leq Np$ with probability at least $1 - c/N^2$, perhaps with a different $c$ and that $Y \leq Np$ with probability at least $1 - c'/N^2$, because $q < p$. By the union bound it follows that $d_i = X+Y \leq 2 Np$ with probability at least $1 - (c+c')/N^2$, and by the union bound again, we obtain $d_i \leq 2Np$ for all $i\in V$, with probability ate least $1 - c'' / N$. 
\end{proof}

In what follows we will often encounter situations where we need to 
bound fluctuations of sums of a fixed number of not necessarily 
independent random variables, and considerations similar to those in 
Lemma \ref{lem:total_deg_bound} will often be omitted.

We now consider the degree of $C_1$, $d_{C_1}$. Note that by symmetry $\Exp d_{C_1} = \Exp d_{C_2}$, and that the total degree of the graph satisfies $d_G = d_{C_1} + d_{C_2}$. Therefore 
\begin{equation}
\label{eq:exp_dc1}
\Exp d_{C_1} = \half d_G = N \Exp d_i = N \left((N/2 -1)p + (N/2)q\right).
\end{equation}

The next lemma concerns the concentration of the degree of $C_1$. 

\begin{lem}
  Set $\lambda = N^2 p$.
  There exist constants $\hat{c_3}$,$\hat{c_4}$ such that with probability at least $1 - \hat{c_3} / N$, 
\begin{equation}
\label{eq:exp_dc1_var}
   | d_{C_1} - \Exp d_{C_1}| \leq \hat{c_4} \log N \cdot \sqrt{\lambda}.
\end{equation}  
\end{lem}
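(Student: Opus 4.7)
The plan is to decompose $d_{C_1}$ into a weighted sum of independent binomial random variables, apply Corollary \ref{cor:my_best_chernoff} to each, and combine via the triangle inequality and a union bound. The key observation is that although the individual vertex degrees $d_i$ for $i \in C_1$ are \emph{not} independent (they share edges within $C_1$), the underlying edge indicators $X_{ij}$ are independent, and $d_{C_1}$ is a linear combination of these with coefficients in $\{1,2\}$ --- the coefficient is $2$ for an edge with both endpoints in $C_1$ and $1$ for an edge crossing from $C_1$ to $C_2$.

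First I would write
\begin{equation*}
 d_{C_1} \;=\; 2\,E(C_1) \;+\; E(C_1,C_2),
\end{equation*}
where $E(C_1)$ counts edges inside $C_1$ and $E(C_1,C_2)$ counts edges from $C_1$ to $C_2$. Splitting each further according to whether the endpoints lie in $P_1$ or $P_2$ (which determines whether the edge probability is $p$ or $q$) gives a decomposition of $d_{C_1}$ as a sum of at most seven \emph{independent} binomials $B_1,\ldots,B_7$ with multipliers in $\{1,2\}$. Explicitly, the trial counts are $\binom{N_1}{2},\binom{N_2}{2},N_1 N_2$ (for $E(C_1)$) and $N_1 N_2, N_2 N_1, N_1^2, N_2^2$ (for $E(C_1,C_2)$), with success probabilities $p$ or $q$ assigned according to the block structure. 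Each individual $\lambda_k = n_k \cdot p_k$ is bounded by $\tfrac{1}{2} N^2 p = \tfrac{1}{2}\lambda$, since $n_k \le (N/2)^2$ and $p_k \le p$.

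Next, for each $k$ whose $\lambda_k > 1$ (which, using the hypothesis $p \ge C N^{-1/2+\eps}$ of Theorem \ref{thm:pq_bound}, will automatically hold whenever the corresponding $n_k$ is not too small) I would apply Corollary \ref{cor:my_best_chernoff} to obtain
\begin{equation*}
 |B_k - \Exp B_k| \;\le\; 20 \sqrt{\lambda_k}\,\log N \;\le\; 20\sqrt{\lambda}\,\log N
\end{equation*}
with probability at least $1 - c/N^2$. Summing with triangle inequality and the weights $\{1,2\}$ yields
\begin{equation*}
 |d_{C_1} - \Exp d_{C_1}| \;\le\; \sum_{k=1}^{7} 2\cdot 20\sqrt{\lambda}\log N \;=\; 280\sqrt{\lambda}\log N,
\end{equation*}
which gives (\ref{eq:exp_dc1_var}) with $\hat{c}_4 = 280$. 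A union bound over the seven (a fixed constant number of) events yields the failure probability $\hat{c}_3/N$ stated in the lemma --- in fact one gets $O(1/N^2)$, which is stronger than needed.

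The only subtlety --- and hence the step I would examine most carefully --- is handling the cases where some $n_k$ is very small, which can happen because we have assumed only $N_1 \ge N_2$, leaving $N_2$ potentially small. If $\lambda_k < 1$ then Corollary \ref{cor:my_best_chernoff} does not apply, but in that regime $B_k$ itself is $O(\log N / \log\log N)$ with probability $1 - O(1/N^2)$ (by direct tail bounds or by applying Chernoff (\ref{eq:bin_upper_tail}) with $t = \sqrt{\lambda}\log N$ instead of $t = 20\sqrt{\lambda_k}\log N$), and this crude bound is absorbed into the stated $\hat{c}_4 \sqrt{\lambda}\log N$ since $\sqrt{\lambda} = N\sqrt{p} \gg 1$. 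Thus the argument is robust to degenerate initializations, and no nontrivial combinatorial ingredient is needed beyond the decomposition above.
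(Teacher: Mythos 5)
Your decomposition of $d_{C_1}$ into seven independent binomials indexed by the blocks $S_{ls}=C_l\cap P_s$ (with multiplier $2$ for edges internal to $C_1$ and $1$ for crossing edges), followed by Corollary 2.5 applied to each term and a union bound, is exactly the argument the paper gives. Your extra care for the degenerate case $\lambda_k<1$ is a detail the paper's proof silently skips, but the approach is the same.
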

\begin{proof}
For $l,s \in \{1,2\}$, set $S_{ls} = C_l \cap P_s$. Observe that 
$d_{C_1}$ can be written as 
\begin{eqnarray*}
d_{C_1} = 2 \cdot d(S_{11},S_{11}) + 2 \cdot d(S_{12},S_{12}) +
  2 \cdot d(S_{11},S_{12}) + \\
  + d(S_{11},S_{21}) +  d(S_{11},S_{22}) + \\
  +  d(S_{12},S_{21}) +  d(S_{12},S_{22}). 
\end{eqnarray*}
Note that each of the terms in the sum above is a binomial variable with lambda that is smaller or equal to $c N^2 p$ for some constant $c >0$. Therefore by applying Corollary \ref{cor:my_best_chernoff} to each term and using union bound, we obtain the result. 
\end{proof}

The next Lemma provides an upper bound on $\Delta N$. 

\begin{lem}
\label{lem:delta_n_bound}
There are constants $c_1,c_2>0$ such that 
\begin{equation}
\label{eq:delta_n_bound}   
\Delta N \leq c_1 \sqrt{N} \log N
\end{equation}
with probability at least $1 - c_2 / N$. 
\end{lem}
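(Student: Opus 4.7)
The plan is to express $N_1$ in terms of a well-understood distribution and apply a Chernoff/Hoeffding-type tail bound. Recall that $C_1$ is a uniformly random subset of $V$ of size $N/2$ and $P_1$ is a fixed subset of size $N/2$, so $N_1=|C_1\cap P_1|$ has the hypergeometric distribution with population $N$, $N/2$ marked elements, and $N/2$ samples. In particular $\Exp N_1 = N/4$. Under the assumption $N_1 \geq N_2$ (which was imposed without loss of generality), we have
\[
\Delta N \;=\; N_1 - N_2 \;=\; 2N_1 - N/2 \;=\; 2\bigl(N_1 - \Exp N_1\bigr),
\]
so it suffices to control $|N_1 - N/4|$.

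Next I would invoke a tail bound for the hypergeometric distribution. A clean route is to use the classical fact (Hoeffding, also stated in \cite{janson}) that the hypergeometric distribution is at least as concentrated as the corresponding binomial with matching mean: for any convex function one has $\Exp f(N_1) \leq \Exp f(X)$ with $X \sim \Bin{N/2}{1/2}$, and consequently the Chernoff bounds of Theorem \ref{thm:chernoff_main} apply to $N_1$ with $\lambda = N/4$. Alternatively, one may use Hoeffding's inequality for sampling without replacement directly. In either case,
\[
\Prob{\,|N_1 - N/4| \geq t\,} \;\leq\; 2\exponent{-\tfrac{t^2}{2(N/4 + t/3)}}.
\]
Setting $t = c_0 \sqrt{N}\log N$ for a sufficiently large constant $c_0$, the right-hand side is bounded above by, say, $c_2/N$ (in fact by a super-polynomially small quantity, but we only need the $1/N$ rate here).

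Combining the two displays gives $\Delta N = 2|N_1 - N/4| \leq 2c_0 \sqrt{N}\log N$ with probability at least $1 - c_2/N$, which is the claim with $c_1 = 2c_0$. There is no real obstacle here; the only minor care needed is to justify applying a binomial-style Chernoff bound to a hypergeometric variable, but this is standard and is already cited from \cite{janson} earlier in the appendix.
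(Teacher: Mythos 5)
Your proposal is correct and follows essentially the same route as the paper: identify $N_1 = |C_1 \cap P_1|$ as hypergeometric with mean $N/4$, note $\Delta N = 2(N_1 - N/4)$, and invoke the standard fact (Theorem 2.10 in \cite{janson}, which rests on exactly the Hoeffding convex-ordering argument you mention) that binomial-style Chernoff bounds apply to hypergeometric variables, with $t$ of order $\sqrt{N}\log N$. The only cosmetic difference is that the paper routes the tail bound through its Corollary 2.3 rather than directly through the Chernoff theorem.
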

\begin{proof}
For the purposes of this lemma we do not assume that $N_1 > N_2$. 
Recall that $N_1$ is the size of the intersection $P_1$ with 
a random subset of $V$ of size $N/2$, denoted $C_1$.  
Hence $N_1$ has has the hypergeometric distribution. 
Set 
\begin{equation}
\label{eq:n1_exp}
  \lambda = \Exp N_1 = \frac{|P_1||C_1|}{|V|} = \quater N. 
\end{equation}
The hypergeometric distribution satisfies concentration inequalities similar to those satisfied by the binomials. Specifically, by Theorem 2.10 in \cite{janson}, the conclusion of Corollary \ref{cor:my_best_chernoff}, inequality (\ref{eq:binomial_bound_var}) holds for hypergeometric variables, with $\lambda$ is defined as in (\ref{eq:n1_exp}). The result follows by an application of that inequality. 
\end{proof}

We now examine the quantity $d(j,C_2)$ for a node $j \in V$. 
The expectations satisfy 
\begin{eqnarray}
\Exp d(j,C_2) = N_2 p + N_1 q \mbox{ \spaceo if $j \in P_1$} \label{eq:djc2_exp_1}\\
\Exp d(j,C_2) = N_1 p + N_2 q \mbox{ \spaceo if $j \in P_2$}. \label{eq:djc2_exp_2}
\end{eqnarray}
This follows from the decomposition of $d(j,C_2)$ as a sum of two binomials. Similar expressions hold also for $d(j,C_1)$. 
Note that when, for instance $j \in P_1$,  in fact $\Exp d(j,C_2) = N_2 p + N_1 q$ if $j \in C_1 \cap P_1$, and $\Exp d(j,C_2) = (N_2-1) p + N_1 q$ if $j \in C_1 \cap 
P_1$. Since we will be interested only in orders of magnitude, we 
will disregard the difference between the two cases in what follows. Throughout the proof we denote 
\begin{equation}
\label{eq:L}
 L = N_2 p + N_1 q
\end{equation}
as a convenient shorthand for $\Exp d(j,C_2)$ (when  $j \in P_1$). 

The quantities in the following Lemma will be relevant in what follows:
\begin{lem} 
\label{lem:djc}
Assume that the partition $C_1,C_2$ is such that 
\begin{equation}
\label{eq:djc_n_assumption}
\Delta N \leq c \sqrt{N} \log N.
\end{equation} 
Then there exist constants 
$c_1,c_2,c_3,c_4 > 0$ and $\kappa_1 > 0$ such that if $Np > \kappa_1$ then with probability at least $1- \frac{c_1}{N}$ the following holds:
For all $j \in V$, 
\begin{eqnarray}
d(j,C_2) \geq c_2 N p \label{eq:lem_djc_claim1}\\
|d(j,C_1) - d(j,C_2)| \leq c_3 \sqrt{Np} \log N \label{eq:lem_djc_claim2}\\
d(j,C_1) / d(j,C_2) \geq \half \label{eq:lem_djc_claim3} \\
\Abs{d(j,C_2) - L} \leq c_4 \sqrt{Np} \log N. \label{eq:lem_djc_claim4}
\end{eqnarray}
\end{lem}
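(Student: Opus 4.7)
The plan is to reduce everything to the concentration of binomial variables and then read off the four claims by careful arithmetic on expectations. The starting point is the observation that for any fixed $j \in V$ and $s \in \{1,2\}$, the quantity $d(j, C_s)$ is a sum of two \emph{independent} binomials: the edges from $j$ to $C_s \cap P_1$ and the edges from $j$ to $C_s \cap P_2$. The expectations were already recorded in (\ref{eq:djc2_exp_1})--(\ref{eq:djc2_exp_2}), with the analogous formulas for $d(j,C_1)$; each of the four binomials contributing to $d(j,C_1)$ and $d(j,C_2)$ has parameter $n \in \{N_1, N_1-1, N_2, N_2-1\}$ and edge probability $p$ or $q$, and hence lambda at most $(N/2)\cdot p$.

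First I would apply Corollary \ref{cor:my_best_chernoff} to each of these four binomials (for each $j$) with the simple bound $\sqrt{\lambda} \leq \sqrt{Np}$. This yields, with probability at least $1 - c/N^2$, a deviation of at most $20 \sqrt{Np}\log N$ from the expectation. A union bound over the $N$ nodes and the $O(1)$ binomials gives a global probability of at least $1 - c_1/N$ on which every $d(j,C_s)$ is within $c_0 \sqrt{Np}\log N$ of its expectation; this is exactly claim (\ref{eq:lem_djc_claim4}).

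Next I would turn to claim (\ref{eq:lem_djc_claim1}). Using $N_1 + N_2 = N/2$, $\Delta N = N_1 - N_2$ and the hypothesis (\ref{eq:djc_n_assumption}), one has $N_2 = N/4 - \Delta N/2 \geq N/4 - (c/2)\sqrt{N}\log N \geq N/8$ for $N$ large, so $L = N_2 p + N_1 q \geq N_2 p \geq (N/8) p$. Combined with claim (\ref{eq:lem_djc_claim4}) and the assumption $Np > \kappa_1$ (taking $\kappa_1$ large enough that $c_0\sqrt{Np}\log N \leq (N/16)p$), this yields $d(j,C_2) \geq (N/16)p$, giving (\ref{eq:lem_djc_claim1}) with $c_2 = 1/16$. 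For claim (\ref{eq:lem_djc_claim2}) I would write $|d(j,C_1) - d(j,C_2)| \leq |d(j,C_1) - \Exp d(j,C_1)| + |\Exp d(j,C_1) - \Exp d(j,C_2)| + |\Exp d(j,C_2) - d(j,C_2)|$. The first and third terms are each at most $c_0 \sqrt{Np}\log N$ on the good event. For the middle term the two expectations differ by $\Delta N (p-q)$, and using $\Delta N \leq c\sqrt{N}\log N$ together with $p-q \leq p \leq \sqrt{p}$ (so $\sqrt{N}\log N \cdot p \leq \sqrt{Np}\log N \cdot \sqrt{p} \leq \sqrt{Np}\log N$ whenever $p \leq 1$), this is also bounded by a constant times $\sqrt{Np}\log N$. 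Adding the three pieces gives (\ref{eq:lem_djc_claim2}).

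Finally, for claim (\ref{eq:lem_djc_claim3}), I would combine the two previous claims: $d(j,C_1) \geq d(j,C_2) - c_3\sqrt{Np}\log N$, so
\[
\frac{d(j,C_1)}{d(j,C_2)} \geq 1 - \frac{c_3\sqrt{Np}\log N}{d(j,C_2)} \geq 1 - \frac{c_3}{c_2}\cdot \frac{\log N}{\sqrt{Np}},
\]
and enlarging $\kappa_1$ (so that the second term is at most $1/2$ whenever $Np > \kappa_1$) gives the ratio bound. The main obstacle is purely bookkeeping: keeping careful track of the cases $j \in P_1$ vs. $j \in P_2$ (for which the roles of the inner and outer binomials swap but the estimates are symmetric), and choosing $\kappa_1$ large enough at the end so that all the "lower-order" $\sqrt{Np}\log N$ terms are dominated by the $Np$ terms they need to be compared to.
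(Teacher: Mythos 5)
Your proposal follows essentially the same route as the paper's own proof: decompose each $d(j,C_s)$ into two independent binomials, apply Corollary \ref{cor:my_best_chernoff} with $\sqrt{\lambda}\leq\sqrt{Np}$, take a union bound over nodes, and then derive the four claims by arithmetic on the expectations, using $\Delta N \leq c\sqrt{N}\log N$ to bound $\Delta N(p-q)\leq c\sqrt{Np}\log N$. The derivations of claims (\ref{eq:lem_djc_claim1}), (\ref{eq:lem_djc_claim2}) and (\ref{eq:lem_djc_claim3}) match the paper's argument up to constants.

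There is one small but genuine slip: you assert that concentration of $d(j,C_2)$ around its expectation ``is exactly claim (\ref{eq:lem_djc_claim4}).'' That is true only for $j\in P_1$, since $L=N_2p+N_1q$ equals $\Exp d(j,C_2)$ only in that case; for $j\in P_2$ one has $\Exp d(j,C_2)=N_1p+N_2q$, so $\Abs{L-\Exp d(j,C_2)}=\Delta N(p-q)$, and an extra triangle-inequality step is needed before concluding $\Abs{d(j,C_2)-L}\leq c_4\sqrt{Np}\log N$. The paper handles this case explicitly. The fix is immediate with an estimate you already use in your claim-(\ref{eq:lem_djc_claim2}) argument, namely $\Delta N(p-q)\leq c\sqrt{N}\log N\cdot p\leq c\sqrt{Np}\log N$, so the gap is cosmetic rather than structural. (Your reliance on taking $\kappa_1$ large so that $\sqrt{Np}\log N\ll Np$ technically requires $Np\gtrsim\log^2N$ rather than just $Np>\kappa_1$, but the paper's own proof makes the identical elision, and the condition holds in the regime where the lemma is applied.)
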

\begin{proof}
We show that the statements hold for every $j \in V$ individually with probability at least $1 - c_4/ N^2$, from which the claim of the Lemma follows by the union bound. 

Using inequality (\ref{eq:binomial_bound_var}), 
we obtain that with probability at least $1 - c_5/N^2$,
\begin{equation}
\label{eq:lem_djc_djc2_dev}
|d(j,C_2) - \Exp d(j,C_2)| \leq c_6 \sqrt{Np} \log N,
\end{equation}
and similarly
\begin{equation}
\label{eq:lem_djc_djc1_dev}
|d(j,C_1) - \Exp d(j,C_1)| \leq c_6 \sqrt{Np} \log N, 
\end{equation}
where in a way similar to the proof of Lemma 
\ref{lem:total_deg_bound}, we have used the decomposition of 
$d(j,C_l)$ into two binomials and the fact that $q<p$. 

Assume that $Np$ is large enough so that 
\begin{equation}
\label{eq:lem_djc_c2_var2exp_bound}
c_6 \sqrt{Np} \log N \leq \frac{1}{10} Np 
\end{equation}
holds. 

By using the assumption (\ref{eq:djc_n_assumption}) and 
(\ref{eq:djc2_exp_1}) or (\ref{eq:djc2_exp_2}), 
we obtain that 
\[
	\Exp d(j,C_2) \geq \quater Np  
\]
for all $N \geq \kappa_2$ for some constant $\kappa_2 > 0$. Combining this with (\ref{eq:lem_djc_djc2_dev}) and with (\ref{eq:lem_djc_c2_var2exp_bound}), we obtain 
\begin{equation}
d(j,C_2) \geq \Exp d(j,C_2) -  c_6 \sqrt{Np} \log N \geq (\quater - \frac{1}{10}) Np, 
\end{equation}
thereby proving (\ref{eq:lem_djc_claim1}).  Next, using
(\ref{eq:djc2_exp_1}), (\ref{eq:djc2_exp_2}) and similar expressions for $d(j,C_1)$ we obtain that 
\begin{equation} 
\label{eq:lem_djc_diff_exp}
|\Exp d(j,C_1) - \Exp d(j,C_2)|  =  \Delta N (p-q).
\end{equation}
Using (\ref{eq:lem_djc_diff_exp}) with (\ref{eq:lem_djc_djc2_dev}) and (\ref{eq:lem_djc_djc1_dev}), it follows that 
\begin{equation}
\label{eq:lem_djc_diff_bound}
|d(j,C_1) - d(j,C_2)| \leq c \Delta N p  + c' \sqrt{Np} \log N \leq c_8 \sqrt{Np} \log N, 
\end{equation}
for appropriate constants $c,c' > 0$. This proves  (\ref{eq:lem_djc_claim2}. Similarly, 
the claim (\ref{eq:lem_djc_claim4}) holds for all $j \in P_1$ and for $j \in P_2$ we have
\begin{eqnarray*}
\Abs{d(j,C_2) - L} & \leq & \Abs{L - \Exp d(j,C_2)} + c'' \sqrt{Np} \log N \leq \\
& \leq & c \Delta N p  + c'' \sqrt{Np} \log N \leq c_9 \sqrt{Np} \log N.
\end{eqnarray*}
Thus (\ref{eq:lem_djc_claim4}) holds for all $j \in V$. Finally, to show (\ref{eq:lem_djc_claim3}) 
write 
\begin{equation}
\frac{d(j,C_1)}{d(j,C_2)} = 1 - \frac{d(j,C_1) - d(j,C_2)}{d(j,C_2)}.
\end{equation}
Then (\ref{eq:lem_djc_claim3}) holds if $|\frac{d(j,C_1) - d(j,C_2)}{d(j,C_2)}| \leq \half$ holds, which in turn holds by (\ref{eq:lem_djc_claim1}) and (\ref{eq:lem_djc_claim2}) , for $N$ and $Np$ larger than some fixed constant. 
\end{proof}

We now provide some estimates on the number of length two paths (which we also referr to as 2-paths in what follows). 
\begin{lem}  For a node $j \in P_1$, 
\label{lem:cru_lem_1}
\begin{eqnarray}
\label{eq:d2_paths_exp_1}
\Exp d_2(j,C_1) = \half N \left( N_1 p^2 + 2pq N_2 + N_1 q^2\right) \\
\label{eq:d2_paths_exp_2}
\Exp d_2(j,C_2) = \half N \left( N_2 p^2 + 2pq N_1 + N_2 q^2\right)
\end{eqnarray}
\end{lem}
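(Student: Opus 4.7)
The plan is to compute $\Exp d_2(j, C_r)$ directly by linearity of expectation, exploiting that distinct edges in the SBM are mutually independent. Unpacking the definition gives $d_2(j, C_r) = \sum_{k \in V,\ k\neq j} \mathbb{1}_{\{j \sim k\}} \sum_{l \in C_r,\ l \neq k} \mathbb{1}_{\{k \sim l\}}$, so the expectation splits into a sum over ordered pairs $(k, l)$ of the joint probability that both edges $\{j, k\}$ and $\{k, l\}$ are present. For pairs where these two edges are distinct (which covers all but $O(N)$ of the summands) independence lets the joint probability factor into a product of two Bernoulli probabilities, each equal to $p$ or $q$ depending on whether the two endpoints share a part of the underlying partition $P_1, P_2$.

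First I would fix $j \in P_1$ and $r = 1$ and split the double sum into four blocks indexed by the location of the intermediate vertex $k$ (in $P_1$ or in $P_2$) and by the location of the endpoint $l$ (in $C_1 \cap P_1$, of size $N_1$, or in $C_1 \cap P_2$, of size $N_2$). The four blocks have sizes approximately $(N/2)\cdot N_1$, $(N/2)\cdot N_2$, $(N/2)\cdot N_1$, $(N/2)\cdot N_2$ respectively, and the corresponding per-pair probabilities are $p^2$, $pq$, $q^2$ and $qp$. Summing the four contributions reproduces $\frac{N}{2}(N_1 p^2 + 2 N_2 pq + N_1 q^2)$. The case $r = 2$ is entirely analogous, the only change being that $|C_2 \cap P_1| = N_2$ and $|C_2 \cap P_2| = N_1$, which swaps $N_1$ and $N_2$ in the sizes of the endpoint slabs and yields the second formula.

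The one subtlety is that the decomposition is not an equality on the nose because of degenerate ordered pairs $(k, l)$: if $k = l$ there is no edge to count, and if $j = l$ then $\mathbb{1}_{\{j \sim k\}} \mathbb{1}_{\{k \sim l\}} = \mathbb{1}_{\{j \sim k\}}$ is a single Bernoulli variable rather than a product of two independent ones. Both types of degenerate summand remove or adjust $O(N)$ terms, contributing at most $O(Np)$ to the expectation, which is of lower order than the $\Theta(N^2 p^2)$ main term under the regime of Theorem \PapMainThm. In line with the author's earlier remark that such corrections are disregarded when only orders of magnitude matter, I would absorb them into the stated identity. I do not anticipate a real obstacle here; the lemma is a clean bookkeeping step whose main purpose is to set up the subsequent comparison $\Exp d_2(j, C_1) - \Exp d_2(j, C_2) = \frac{N}{2} \Delta N (p-q)^2$ for $j \in P_1$, which is the quantity that ultimately separates the two putative clusters in the second-step argument of Theorem \PapMainThm.
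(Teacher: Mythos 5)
Your proposal is correct and follows essentially the same route as the paper: both decompose the 2-paths from $j$ into the four types determined by whether the intermediate vertex lies in $P_1$ or $P_2$ and whether the endpoint lies in $C_r \cap P_1$ or $C_r \cap P_2$, count the pairs in each block, and multiply by $p^2$, $pq$, or $q^2$ accordingly. Your extra remark about degenerate pairs ($k=l$ or $j=l$) is a point the paper silently ignores, and you handle it appropriately by noting the correction is lower order.
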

\begin{proof}
For $l,s \in \{1,2\}$, set $S_{ls} = C_l \cap P_s$. There are four types of 2-paths from $j$ to $C_1$. Those that land in $P_1$ at first step, and then land at $S_{11}$. We denote paths of this type by $P_1S_{11}$. There exist $\half N \cdot N_1$ such possible paths and 
each one exists in $\GPQ$ model with probability $p^2$. For some concrete path of type $P_1S_{11}$, say $p = j,u,v$, with $u \in P_1$ and $v \in S_{11}$, let $E_p$ be the event that this path exists in the graph. The number of such paths is then $\sum_{p \in P_1 S_{11}} \bold{1}_{E_p}$ and the expected number of such paths is therefore $\half N N_1 p^2$.  The other path types are $P_1S_{12}$,  $P_2S_{11}$,$P_2S_{12}$, with expected numbers of paths $\half N N_2 pq$,$\half N N_1 q^2$ and $\half N N_2 pq$ respectively. Hence (\ref{eq:d2_paths_exp_1}) holds. Similar considerations yield (\ref{eq:d2_paths_exp_2}).
\end{proof}

Next we obtain concentration bounds on $d_2$. 
\begin{lem} 
\label{lem:cru_lem_2}
There are constants $c_1,c_2 > 0$, such that with probability 
at least $1 - c_1/N$ the following holds: For all $i \in P_1$, 
\begin{eqnarray}
|d_2(i,C_1) - \Exp d_2(i,C_1)| \leq c_1 Np \log N \\
|d_2(i,C_2) - \Exp d_2(i,C_2)| \leq c_1 Np \log N 
\end{eqnarray}
\end{lem}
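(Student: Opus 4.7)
The plan is to fix $i \in P_1$ and $l \in \{1,2\}$, condition on the neighborhood $n_i = \{u : (i,u) \in E\}$, and apply Bernstein's inequality to the residual edge randomness; a union bound over $i$ then yields the uniform statement. By Lemma~\ref{lem:total_deg_bound} we may restrict to the high-probability event that $|n_i| = d_i = \Theta(Np)$ for every vertex.

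Given $n_i$, decompose
\[
d_2(i,C_l) \;=\; \sum_{u \in n_i}\sum_{v \in C_l,\, v \neq u}\mathbf{1}_{(u,v) \in E} \;=\; \sum_e a_e\,\mathbf{1}_{e \in E} \;+\; \mathbf{1}_{i \in C_l}\,|n_i|,
\]
where $e$ ranges over edges $\{x,y\}$ with $x,y \neq i$, the weight $a_e = \mathbf{1}_{x \in n_i,\, y \in C_l} + \mathbf{1}_{y \in n_i,\, x \in C_l}$ lies in $\{0,1,2\}$, and the trailing $|n_i|$ collects the deterministic paths through $v = i$ (relevant only when $i \in C_l$). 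Conditional on $n_i$, the indicators $\mathbf{1}_{e \in E}$ are independent Bernoullis with parameters $\leq p$, and the key count is that $a_e \neq 0$ forces one endpoint of $e$ in $n_i$ and the other in $C_l$, so the number of nonzero terms is at most $|n_i|\cdot|C_l| = O(N^2 p)$. Combined with $a_e \leq 2$, this gives conditional variance $O(N^2 p^2)$, and Bernstein's inequality with deviation $t = c\, Np \log N$ then yields
\[
\bigl|\, d_2(i,C_l) - \Exp[d_2(i,C_l) \mid n_i]\,\bigr| \leq c\, Np \log N
\]
with failure probability smaller than any polynomial in $1/N$.

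The bridge to the unconditional expectation of Lemma~\ref{lem:cru_lem_1} goes through noting that $\Exp[d_2(i,C_l)\mid n_i]$ is a linear function of the community sizes $n_i^{(s)} = |n_i \cap P_s|$ with coefficients of order $Np$; each $n_i^{(s)}$ is binomial of mean $\Theta(Np)$ and concentrates within $O(\sqrt{Np}\log N)$ of its mean by Corollary~\ref{cor:my_best_chernoff}. The coefficient gap between the $P_1$- and $P_2$-contributions is $\Delta N(p-q)$, controlled via Lemma~\ref{lem:delta_n_bound}, so the common-mode $d_i$-fluctuation cancels and the net perturbation stays within the stated window. A union bound over $i \in P_1$ then produces the overall failure probability $O(1/N)$.

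The main obstacle is the variance bookkeeping in the Bernstein step. A naive expansion of $d_2(i,C_l)$ over all length-two paths $i \to u \to v$, treating every pair $(u,v)$ symmetrically, would involve $\Theta(N^2)$ indicator products, variance of order $(Np)^3$, and hence a fluctuation only of order $(Np)^{3/2}$, which is insufficient. The savings come from conditioning on $n_i$: only edges with an endpoint in $n_i$ can contribute, cutting the effective number of independent Bernoullis from $\Theta(N^2)$ down to $\Theta(N^2 p)$, and hence the variance from $(Np)^3$ to $(Np)^2$, delivering the claimed $Np \log N$ scale.
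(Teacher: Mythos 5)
Your conditioning-plus-concentration step is essentially the paper's own argument in different clothing: the paper conditions on $n_i$ and decomposes $d_2(i,C_l)$ into the finitely many binomials $d(A_{ls},S_{tr})$ with $A_{ls}=C_l\cap P_s\cap n_i$, each with lambda of order $N^2p^2$, whereas you keep a single weighted sum of independent edge indicators with conditional variance $O(N^2p^2)$ and apply Bernstein. Both routes give the $Np\log N$ fluctuation of $d_2(i,C_l)$ around its \emph{conditional} mean $\Exp[d_2(i,C_l)\mid n_i]$, with failure probability small enough for the union bound over $i$. Up to this point the two proofs coincide, and your variance bookkeeping is actually more explicit than the paper's.

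The problem is the bridge from $\Exp[d_2(i,C_l)\mid n_i]$ to $\Exp d_2(i,C_l)$, and it is a genuine gap. Writing $\Exp[d_2(i,C_l)\mid n_i]=a_l\,|n_i\cap P_1|+b_l\,|n_i\cap P_2|+O(1)$, both coefficients $a_l,b_l$ are of order $Np$, and each $|n_i\cap P_s|$ fluctuates by order $\sqrt{Np}$ around its mean. For a \emph{single} $l$ nothing cancels: the perturbation of the conditional mean is of order $(Np)^{3/2}$, which under $p\geq C N^{-1/2+\epsilon}$ is $N^{3/4+3\epsilon/2}\gg Np\log N$. The cancellation you invoke (``the common-mode $d_i$-fluctuation cancels'') occurs only in the difference $d_2(i,C_1)-d_2(i,C_2)$, where the coefficient multiplying $d_i$ collapses to $\pm\Delta N(p-q)$ and the residual perturbation is $O\bigl(\Delta N(p-q)\sqrt{Np}\log N\bigr)=O(Np\log^2 N)$. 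So your argument does not establish the two displayed inequalities separately, only (after this repair) their difference version. You should know that the paper's proof has the identical defect hidden in its ``straightforward but somewhat lengthy'' omitted details: the standard deviation of each individual $d_2(i,C_l)$ really is of order $(Np)^{3/2}$, so the lemma as literally stated appears to be false, and only the difference bound --- which is all that the proof of the main theorem uses, in the step immediately before Lemma \ref{lem:cru_lem_1} is invoked --- holds at the $Np\log^2 N$ scale. To make your write-up correct, either restate the conclusion for $d_2(i,C_1)-d_2(i,C_2)$, or center each $d_2(i,C_l)$ at its conditional rather than unconditional expectation.
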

\begin{proof}
Let $n_i$ be the neighbourhood of $i$ in $G$. Set as before 
$S_{ls} = C_l \cap P_s$ for $l,s \in \{1,2\}$ and set also $A_{ls}= S_{ls} \cap n_i$. 
Similarly to the arguments in the previous Lemmas, to obtain concentration bounds 
on $d_2(i,C_1)$ we represent it as a sum of binomials 
\[
	d_2(i,C_1) = \sum_{l,s \in \{1,2\}} \sum_{t,r \in \{1,2\}} d(A_{ls},S_{tr}). 
\]
Then one observes that the lambda of each such binomial is of the 
order $Np \cdot N \cdot p$, because the size of $A_{ls}$ is of the 
order of $Np$ and the size of $S_{tr}$ is of the order of $N$. Then 
the conclusion follows by inequality (\ref{eq:binomial_bound_var}). 
Since the sets $A_{ls}$ are random sets, to carry the above argument 
precisely we first condition on the neighbourhood of $n_i$ and ensure 
(using (\ref{eq:bin_exp_scale_tail})) that the sets $A_{ls}$ are 
indeed not larger that $cNp$ for an appropriate $c>0$. The full 
details are straightforward but somewhat lengthy and are omitted.
\end{proof}

We will also make use of the following inequalities:
\begin{eqnarray}
	 \log (1 + t) \leq t \mbox{ for all $t \geq -1$} \label{eq:log_ineq_upper} \\
	t - t^2 \leq \log (1 + t) \mbox{ for all $t \geq -\half$} \label{eq:log_ineq_lower}\\
	| \log 	\frac{t}{s} | \leq \frac{|t-s|}{\min\{t,s\}} \mbox{ for all $t,s > 0$} \label{eq:log_abs} \\
	|\frac{s}{t+\theta} - \frac{s}{t}| = 
	|\frac{\theta}{t+\theta}| \cdot |\frac{s}{t} |
	\mbox{ for all $t,s,\theta$} \label{eq:frac_diff} 
\end{eqnarray}

\begin{proof}[Proof of Theorem \ref{thm:pq_bound}:]
For $x \in V$, denote 
by $n_x$ the set of neighbours of $x$ in $G$. As indicated earlier, we shall use that fact that $C_1$ 
is slightly biased towards either $P_1$ or $P_2$. Specifically, set $\delta = \half \epsilon$ and assume throughout the proof, without loss of generality, that $N_1 > 
N_2$. Then the following holds with high probability:
\begin{equation}
\label{eq:ndiff}
	\Delta N = N_1 - N_2 \geq N^{\half - \delta}. 
\end{equation}
Indeed, note that $N_1$, as a function of the random partition, is hypergeometrically distributed with mean $N/4$ and standard deviation of order $N^{\half}$. Hence, by the central limit theorem 
for the hypergeometric distribution (see ~\cite{feller};~\cite{nicholson}), 
\begin{equation}
\label{eq:clt}
\Prob{ \left| N_1 - \quater \cdot N \right| \geq  N^{\half - \delta}} \rightarrow 1
\end{equation}
with $N \rightarrow \infty$. Statement (\ref{eq:clt}) guarantees a deviation from the mean, and 
in particular that (\ref{eq:ndiff}) holds with high probability.

To prove the Theorem we now establish the following claim:
\begin{claim}
\label{claim:domination}
Fix a partition $C_1,C_2$ of $V$, satisfying eq. (\ref{eq:ndiff}) and (\ref{eq:djc_n_assumption}). 
Under assumptions (\ref{eq:p_size}) and (\ref{eq:q_size}),
with probability at least $1 - \frac{1}{N}$ graph $G$ satisfies: For all $i \in P_1$, 
\begin{equation}
\label{eq:d_win}
	D(w_i,\mu_{C_1}) > D(w_i,\mu_{C_2}). 
\end{equation}
\end{claim}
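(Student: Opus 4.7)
My plan is to convert the target inequality into a comparison of 2-path counts from $i$ into $C_1$ versus into $C_2$, and then read off domination from Lemma \ref{lem:cru_lem_1} together with the hypothesis (\ref{eq:q_size}).

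Since $w_i$ assigns mass $1/d_i$ to each $j\in n_i$ and $\mu_{C_l}(j)=d(j,C_l)/d_{C_l}$, the inequality (\ref{eq:d_win}) is equivalent, after multiplying by $d_i$, to
$$\sum_{j\in n_i}\log\frac{d(j,C_1)}{d(j,C_2)} > d_i\log\frac{d_{C_1}}{d_{C_2}}.$$
For each $j\in n_i$, set $t_j=(d(j,C_1)-d(j,C_2))/d(j,C_2)$. Lemma \ref{lem:djc}, claims (\ref{eq:lem_djc_claim1}) and (\ref{eq:lem_djc_claim2}), gives $|t_j|=O(\log N/\sqrt{Np})$, which is $o(1)$ under (\ref{eq:p_size}); hence (\ref{eq:log_ineq_lower}) yields $\log(1+t_j)\geq t_j-t_j^2$ and the squared remainder summed over the $d_i\leq 2Np$ neighbors contributes only $O(\log^2 N)$. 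I then use (\ref{eq:lem_djc_claim4}) and the identity (\ref{eq:frac_diff}) to replace each denominator $d(j,C_2)$ by the common value $L:=N_2p+N_1q$, which introduces a further error of order $\log^2 N$. The linearized left-hand side reduces to
$$\frac{1}{L}\sum_{j\in n_i}\bigl(d(j,C_1)-d(j,C_2)\bigr) = \frac{d_2(i,C_1)-d_2(i,C_2)}{L}.$$

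Lemma \ref{lem:cru_lem_1} identifies the expectation of this difference (for $i\in P_1$) as $\half N\,\Delta N\,(p-q)^2$, and Lemma \ref{lem:cru_lem_2} bounds the uniform fluctuation by $O(Np\log N)$. Using $L=\Theta(Np)$ together with the lower bound $\Delta N\geq N^{\half-\delta}$ (with $\delta=\epsilon/2$) from (\ref{eq:ndiff}), the main term is at least $c\,N^{\half-\delta}(p-q)^2/p - O(\log N)$. For the right-hand side, the concentration estimate (\ref{eq:exp_dc1_var}) gives $|d_{C_1}-d_{C_2}|\leq O(N\sqrt{p}\log N)$, and combined with $d_{C_l}=\Theta(N^2 p)$ and $d_i=\Theta(Np)$ (Lemma \ref{lem:total_deg_bound}) and the logarithm bound (\ref{eq:log_abs}), the offset satisfies $|d_i\log(d_{C_1}/d_{C_2})|=O(\sqrt{p}\log N)$. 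Putting the pieces together, the difference $A - d_i\log(d_{C_1}/d_{C_2})$ is bounded below by $c\,N^{\half-\delta}(p-q)^2/p - O(\log^2 N)$, and hypothesis (\ref{eq:q_size}) makes the main term dominate: with $\delta<\epsilon$ it gives $N^{\half-\delta}(p-q)^2/p\geq c'\,N^{\epsilon/2}\log N\gg\log^2 N$.

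Since the deviation bounds in Lemmas \ref{lem:djc} and \ref{lem:cru_lem_2} are already uniform over the relevant nodes with failure probability $O(1/N)$, no further union bound beyond their statements is required, and the claim follows on the intersection of these high-probability events. The main obstacle is the careful bookkeeping of three separate error sources --- the Taylor quadratic remainder, the replacement of $d(j,C_2)$ by $L$, and the offset $\log(d_{C_1}/d_{C_2})$ --- each of which must be verified to be of strictly smaller order than the main term $N^{\half-\delta}(p-q)^2/p$. Hypothesis (\ref{eq:q_size}) is calibrated precisely so that this comparison succeeds with a factor of $N^{\epsilon/2}$ to spare, which is what makes the slightly loose linearization and denominator substitution steps harmless.
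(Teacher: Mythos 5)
Your proposal is correct and follows essentially the same route as the paper's proof: rewrite the inequality as $\sum_{j\in n_i}\log\frac{d(j,C_1)}{d(j,C_2)} + d_i\log\frac{d_{C_2}}{d_{C_1}}>0$, linearize the logarithm via (\ref{eq:log_ineq_lower}), replace the denominators $d(j,C_2)$ by $L$ via (\ref{eq:frac_diff}) to expose the 2-path count difference $\bigl(d_2(i,C_1)-d_2(i,C_2)\bigr)/L$, and then compare its expected value $\half N\,\Delta N\,(p-q)^2$ against the $O(\log^2 N)$ error terms using Lemmas \ref{lem:cru_lem_1}, \ref{lem:cru_lem_2}, \ref{lem:djc}, \ref{lem:total_deg_bound} and the calibration in (\ref{eq:q_size}). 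The bookkeeping of the three error sources and the final margin of $N^{\epsilon/2}$ match the paper's computation.
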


Note that the assumptions of the Claim depend only on randomness of the partitions and are satisfied with high probability. Indeed, (\ref{eq:ndiff})  holds as discussed above and (\ref{eq:djc_n_assumption}) follows from Lemma (\ref{lem:delta_n_bound}). 

Once we prove the claim, by symmetry we will also have for all $i \in P_2$ a reverse inequality in 
(\ref{eq:d_win}), and together with (\ref{eq:ndiff}) this will prove the theorem.   We proceed to prove the claim. 

 Observe that by  definition we have $\mu_{C_l}(i) = \frac{d(i,C_l)}{d_{C_l}} $ for every $i \in V$. 

Therefore we can rewrite (\ref{eq:d_win}) as:
\begin{eqnarray}
\label{eq:simpl_step_1}
&\sum_{j \in n_i} \log \frac{d(j,C_1)}{d(j,C_2)} + \\ 
&+ d_i \log \frac{d_{C_2}}{d_{C_1}}  \label{eq:c_degree_term}\\
& >  0
\end{eqnarray}

We now bound the term (\ref{eq:c_degree_term}). Using (\ref{eq:log_abs}) we obtain 
\begin{equation}
\left| \log \frac{d_{C_2}}{d_{C_1}}  \right| \leq 
\frac{|d_{C_2} -d_{C_1}|}{\min\{d_{C_2},d_{C_1}\}}.
\end{equation}
Using (\ref{eq:exp_dc1}) and (\ref{eq:exp_dc1_var}) we obtain that 
\begin{equation}
\min\{d_{C_2},d_{C_1}\} \geq cN^2 p , 
\end{equation}
and that 
\begin{equation}
|d_{C_2} -d_{C_1}| \leq c N \log N \sqrt{p}. 
\end{equation}
In addition, recall that by Lemma \ref{lem:total_deg_bound}, $d_i \leq c Np$. Therefore we obtain that 
\begin{equation}
\label{eq:big_degree_bound}
\left| d_i \log \frac{d_{C_2}}{d_{C_1}} \right| \leq c Np \frac{N \log N \sqrt{p}}{c'' N^2p } \leq c''' \log N \sqrt{p} \leq c'''\log N
\end{equation}
for some constant $c'''>0$. 

We now examine the term (\ref{eq:simpl_step_1}). 
Using (\ref{eq:log_ineq_lower}), write 
\begin{equation}
\label{eq:first_term_open}
\log \frac{d(j,C_1)}{d(j,C_2)} \geq \frac{d(j,C_1) - d(j,C_2)}{d(j,C_2)} - \left(\frac{d(j,C_1) - d(j,C_2)}{d(j,C_2)} \right)^2.
\end{equation}
Note that by (\ref{eq:lem_djc_claim3}), $\frac{d(j,C_1)}{d(j,C_2)} \geq \half$ and therefore (\ref{eq:log_ineq_lower}) applies.  We now replace the denominator in the first term of the right hand of (\ref{eq:first_term_open}) by a quantity independent of $j$, namely by $L$ as defined in (\ref{eq:L}). Using (\ref{eq:frac_diff}) with 
$s = d(j,C_1) - d(j,C_2)$, $t = L$ and $\theta = d(j,C_2) - L$, write 
\begin{equation}
\frac{d(j,C_1) - d(j,C_2)}{d(j,C_2)} \geq \frac{d(j,C_1) - d(j,C_2)}{L} - 
         \frac{\Abs{d(j,C_2) - L}}{d(j,C_2)} \cdot  \frac{\Abs{d(j,C_1) - d(j,C_2)}}{L}.
\end{equation}
To summarize, we have obtained that 
\begin{eqnarray}
& \sum_{j \in n_i} \log \frac{d(j,C_1)}{d(j,C_2)} & \geq  \label{eq:long_lin_split1}\\
&  \sum_{j \in n_i} \frac{d(j,C_1) - d(j,C_2)}{L} &   \label{eq:long_lin_split2}\\
& - \sum_{j \in n_i} \frac{\Abs{d(j,C_2) - L}}{d(j,C_2)} \cdot  \frac{\Abs{d(j,C_1) - d(j,C_2)}}{L} &  \label{eq:long_lin_split3}\\
& -\sum_{j \in n_i} \left(\frac{d(j,C_1) - d(j,C_2)}{d(j,C_2)} \right)^2. & \label{eq:long_lin_split4}
\end{eqnarray}
Note that the term (\ref{eq:long_lin_split2}) satisfies 
\begin{equation}
\label{eq:heart_of_proof}
\sum_{j \in n_i} \frac{d(j,C_1) - d(j,C_2)}{L} = \frac{d_2(i,C_1) - d_2(i,C_2)}{L}.
\end{equation}
This term counts the number of 2-paths and is the heart of the proof. Before analysing it, we bound the other 
two terms in the inequality in (\ref{eq:long_lin_split1}). Plugging in the estimates from Lemma \ref{lem:djc}, we obtain for (\ref{eq:long_lin_split3}) that 
\begin{equation}
\sum_{j \in n_i} \frac{\Abs{d(j,C_2) - L}}{d(j,C_2)} \cdot  \frac{\Abs{d(j,C_1) - d(j,C_2)}}{L} \leq 
c \cdot d_i \frac{\sqrt{Np}\log N}{Np} \cdot \frac{\sqrt{Np}\log N}{Np}.
\end{equation}
Using the degree estimate form Lemma \ref{lem:total_deg_bound}, $d_i \leq c Np$, we thus get 
\begin{equation}
\label{eq:linearization_diff_term}
\sum_{j \in n_i} \frac{\Abs{d(j,C_2) - L}}{d(j,C_2)} \cdot  \frac{\Abs{d(j,C_1) - d(j,C_2)}}{L} \leq 
c (\log N )^2
\end{equation}
for an appropriate $c>0$. Similarly, for the term (\ref{eq:long_lin_split4}) we have 
\begin{equation}
\label{eq:linearization_quadratic_term}
\sum_{j \in n_i} \left(\frac{d(j,C_1) - d(j,C_2)}{d(j,C_2)} \right)^2 \leq c \cdot d_i \cdot
\frac{Np \log^2 N}{ N^2p^2} \leq c \cdot \log^2 N,
\end{equation}
with some (perhaps different) $c >0$. 

We now proceed to obtain a lower bound on (\ref{eq:heart_of_proof}). 
The crucial property of length two path counts, $d_2(i,C_1)$ and 
$d_2(i,C_2)$, that enables such a bound is that the difference 
between the expectations of these quantities is of larger order of 
magnitude than their fluctuations. 

Indeed, by Lemma \ref{lem:cru_lem_2}, with probability at least 
$1 - c/N$ we have that 
\begin{equation}
d_2(i,C_1) - d_2(i,C_2) \geq \Exp d_2(i,C_1) - \Exp d_2(i,C_2) - 2 c Np \log N
\end{equation}
for all $i \in P_1$. In addition, by Lemma \ref{lem:cru_lem_1}, 
\begin{equation}
\Exp d_2(i,C_1) - \Exp d_2(i,C_2) = \half N \Delta N (p-q)^2 \geq 
  N^{3/2 - \delta} (p-q)^2 ,
\end{equation}
where we have used (\ref{eq:ndiff}) in the last inequality. 

Incorporating the inequalities (\ref{eq:big_degree_bound}), (\ref{eq:linearization_diff_term}), 
(\ref{eq:linearization_quadratic_term}), we obtain that $D(w_i,\mu_{C_1}) > D(w_i,\mu_{C_2})$ holds if the 
following inequality holds:
\begin{equation}
\label{eq:final_thm_ineq}
\frac{N^{3/2 - \delta} (p-q)^2 - 2 c Np \log N}{L} - c \log N > 0.
\end{equation}

To prove the theorem, it remains to choose $p$ and $q$ such that (\ref{eq:final_thm_ineq}) is satisfied. Such 
$p,q$ are given by the assumptions (\ref{eq:p_size}), (\ref{eq:q_size}). Indeed, recall that $L$ satisfies $L \leq c Np$ for an appropriate $c>0$ and hence under assumptions (\ref{eq:p_size}), (\ref{eq:q_size}) we have
\begin{equation}
\frac{N^{3/2 - \delta} (p-q)^2 - 2 c Np \log N}{L \log N}  \rightarrow \infty
\end{equation}
with $N \rightarrow \infty$, hence yielding (\ref{eq:final_thm_ineq}).
\end{proof}

%%%%%%%%%%%%%%%%%%%%%%%%%%%%%%%%%%%%%%%%%%%%%%%%%%%%%%%%%%%%%%%%%%%555
%%%%%%%%%%%%%%%%%%%%%%%%%%%%%%%%%%%%%%%%%%%%%%%%%%%%%%%%%%%%%%%%%%%%%%%%5
%%%%%%%%%%%%%%%%%%%%%%%%%%%%%%%%%%%%%%%%%%%%%%%%%%%%%%%%%%%%%%%%%%%5%%

\section{LFR benchmarks}
\label{sec:lfr_app_stuff}

In this section we specify the full parameters used for the experiments in the paper. 

The LFR model is generated from the following parameters: The graph size $N$, the mixing parameter $\mu$, 
community size lower and upper bounds $c_{min},c_{max}$, average degree $d$, maximal degree $d_{max}$, and the 
power law exponents for the degree and community size distributions - which are in all cases set to their 
default values of $-2$ and $-1$ respectively.  In addition, in the overlapping case, parameter $n$ specifies 
the number of nodes that will participate in multiple communities, and the parameter $m$ specifies the number 
of communities in which each such node will participate. 

The LFR models were generated using the software available at \cite{lfr_gen_site}.

For the non overlapping LFR benchmarks we have used $d = 20$ and $d_{max} = 50$, with the rest of parameters 
as specified in Section 4.2. This corresponds precisely to the experiments in \cite{LFComp}.
The repeats strategy is described in Section \ref{sec:repeats_and_restarts}.  For each given graph instance, 
DER was run with 15 repeats, using 3 restarts in each run. The results of the repeats were clustered using the 
threshold algorithm described in Section 
\ref{sec:repeats_and_restarts}, except in the $\mu = 0.7$ in which we have used the spectral clustering to 
cluster the co-occurence matrix. 

The LFR experiments with the spectral clustering algorithm that are shown in Figure 2.b 
were performed using the spectral clustering version in Python sklearn v0.14.1 package, which is an 
implementation of the algorithm in \cite{NJW}. The spectral clustering was run with 150 restarts of its 
final stage Euclidean k-means step. We note that while the repeats strategy could be applied to the spectral 
clustering too, it did not improve the performance in this case (despite the fact that different runs of 
spectral clustering returned somewhat different results). The results shown in Figure 2.b 
are without repeats.

For the overlapping community benchmarks we have used the following settings: $N=10000$, $d=60$, $d_{max} = 
100$, $c_{min} = 200$, $c_{max} = 500$. The value of $n$ was $5000$ and $m$ was $4$. These are the settings 
that were used in \cite{GopBlei}. As discussed in the next section, in one sense these settings can be 
considered a heavy overlap, while there is a different sense in which they can be considered sparse.  In all 
cases we have run DER with 15 repeats and 3 restarts per run, and we have used the spectral clustering to 
cluster the co-occurence matrix. 

Recall that our approach to overlapping communities is to first obtain a non-overlapping clustering and then 
to post-process it to obtain overlapping communities. One can ask therefore what will happen if in the non-
overlapping step, DER is replaced by another non-overlapping clustering algorithm. We have tried using 
spectral clustering instead of DER, and applied the same post-processing. In all cases this resulted in ENMI 
values close to $0$. 

\section{Overlapping LFR benchmarks}
\label{sec:lfr_app_overlap_stuff}

We refer to \cite{LFBench} and \cite{LFRBench} for the definitions of the LFR models. In this section we make 
a few brief comments regarding the structure of the overlapping LFR communities. 

To simplify the discussion, we restrict our attention to the particular settings that were used in the 
evaluation in Section 4.3. The settings $n=5000$ and $m=4$ (see Section 
\ref{sec:lfr_app_stuff})  
imply that there are $5000$ such that each node belongs to a single community, and $5000$ nodes such that each 
node belongs to $4$ communities. These settings may be considered as a heavy overlap (see \cite{XieSurvey}). 
Indeed, it follows theoretically from the way LFR communities are generated, and also is observed in actual 
graphs, that under these settings each community $C$ contains about $20\%$ of nodes that belong only to $C$, 
and each of the remaining $80\%$ of the nodes belongs to $C$ and to 3 other communities. 

On the other hand, for a node $i \in C$ that belongs to 3 other communities, the 3 other communities are 
chosen at random among about $75$ remaining communities of the graph. This implies that for each pair of 
communities $C,J$, the intersection between them is small and if a node $i \in V$ is chosen at random, 
the event $i \in C$ is almost independent of the event $i \in J$. 

The above small intersections and lack of correlations between communities property implies that random walk 
started from community $C$, after two steps has a chance of about $1/16$ of returning to $C$ while the rest 
of the probability is distributed more or less uniformly between the other communities (and is much less than 
$1/16$ for each community that is not $C$). In other words, the measures $w_i$ and $w_j$ have much more chance 
of being correlated if $i$ and $j$ belong to some common $C$ than otherwise. This explains why DER works well 
on these graphs.

%\bibliographystyle{unsrt}
%\bibliography{communities}

\end{document}